\documentclass[journal]{IEEEtran}
\usepackage[cm]{fullpage}
\usepackage{amsthm}

% Also note that the "draftcls" or "draftclsnofoot", not "draft", option
% should be used if it is desired that the figures are to be displayed in
% draft mode.
%

% *** MISC UTILITY PACKAGES ***
%
%\usepackage{ifpdf}
% Heiko Oberdiek's ifpdf.sty is very useful if you need conditional
% compilation based on whether the output is pdf or dvi.
% usage:
% \ifpdf
%   % pdf code
% \else
%   % dvi code
% \fi
% The latest version of ifpdf.sty can be obtained from:
% http://www.ctan.org/tex-archive/macros/latex/contrib/oberdiek/
% Also, note that IEEEtran.cls V1.7 and later provides a builtin
% \ifCLASSINFOpdf conditional that works the same way.
% When switching from latex to pdflatex and vice-versa, the compiler may
% have to be run twice to clear warning/error messages.

% *** CITATION PACKAGES ***
%
\usepackage{cite}
\usepackage{fixltx2e}

\usepackage{graphicx}

\usepackage{subcaption}              %subfigures with \mbox and \subfigure[]

\usepackage[utf8]{inputenc}

\usepackage{amsfonts}
\usepackage{amsmath}
\usepackage{mathtools}
%\usepackage{amsthm}   % use \qedhere to place the QED 
                      % symbol at the end of a math line
                      % or environment {proof}.

\usepackage{amssymb}

\usepackage{bm}

\usepackage{color,verbatim}
\usepackage{multirow}
\usepackage{accents}

\usepackage{theoremref}
% use \thlabel{} and \thref{}

%\usepackage{balance}  % if you want to balance the height of the two columns
% at the last page write \balance on the first column of the last page.

\usepackage{flushend}

\usepackage{url}

%\usepackage{lineno} %-> you should copy the .sty files to the current
% folder and include \linenumbers after \begin{document}. Note, the
% package does not work well with environment *equation*. It is better
% to use the environment *eqnarray* for equations.

%%%%%%%%%%%%%%%%%%%%%%%%%%%%%%%%%%%%%%%%%%%%%%%%%%%%
%% CUSTOMIZATION %%%%%%%%%%%%%%%%%%%%%%%%%%%%%%%%%%%
%%%%%%%%%%%%%%%%%%%%%%%%%%%%%%%%%%%%%%%%%%%%%%%%%%%%

%\def\figurename{Figura}
%\def\refname{Referencias}

%% \topmargin 0truein
%% \topskip 0truein
%% \headheight 0truein
%% \headsep 0.5truein
%% %\footheight 0truein
%% \oddsidemargin 0.0in
%% \evensidemargin 0.0in
%% \textwidth 6.5in
%% \textheight 8.5in

%%%%%%%%%%%%%%%%%%%%%%%%%%%%%%%%%%%%%%%%%%%%%%%%%%%%
%% ENVIRONMENTS%%%%%%%%%%%%%%%%%%%%%%%%%%%%%%%%%%%%%
%%%%%%%%%%%%%%%%%%%%%%%%%%%%%%%%%%%%%%%%%%%%%%%%%%%%

\newcounter{rulecounter}
\newcommand{\resetrule}{ \setcounter{rulecounter}{0}}
\resetrule

% Color box 
\newsavebox{\selvestebox}
\newenvironment{colbox}[1]
  {\newcommand\colboxcolor{#1}%
   \begin{lrbox}{\selvestebox}%
   \begin{minipage}{\dimexpr\columnwidth-2\fboxsep\relax}}
  {\end{minipage}\end{lrbox}%
   \begin{center}
   \colorbox{\colboxcolor}{\usebox{\selvestebox}}
   \end{center}}

% Colors
\definecolor{orange}{rgb}{1,0.8,0}
\definecolor{gray}{rgb}{.9,0.9,0.9}
\definecolor{darkgray}{rgb}{.3,0.3,0.3}
\definecolor{darkblue}{rgb}{.1,0.0,0.3}
\definecolor{lightblue}{rgb}{0.7,0.7,1}
\definecolor{lightred}{rgb}{1,0.7,.7}
\definecolor{purple}{RGB}{204,153,255}
\definecolor{lightgray}{rgb}{.95,0.95,0.95}
\definecolor{lightgreen}{rgb}{0.3,0.5,0.3}
\definecolor{darkgreen}{rgb}{0.05,0.3,0.05}

%%%%%%%%%%%%%%%%%%%%%%%%%%%%%%%%%%%%%%%%%%%%%%%%%%%%
%% COMMANDS %%%%%%%%%%%%%%%%%%%%%%%%%%%%%%%%%%%%%%%%
%%%%%%%%%%%%%%%%%%%%%%%%%%%%%%%%%%%%%%%%%%%%%%%%%%%%

% text mode

\newcommand{\ra}{$\rightarrow$~}

% math mode

%\newcommand{\q}[1]{\mbox{$q^{- #1}$}}
\newcommand{\bbm}[1]{{\bar{\bm #1}}}

\newcommand{\tbm}[1]{{\tilde{\bm #1}}}

\newcommand{\hbm}[1]{{\hat{\bm #1}}}
\newcommand{\inv}{^{-1}}

\newcommand{\rfield}{\mathbb{R}}

\newcommand{\cov}[2]{\mathop{\rm Cov}\left\{#1,#2\right\}}
\newcommand{\diag}[1]{\mathop{\rm diag}\brackets{#1}}

\newcommand{\tr}[1]{\mathop{\rm Tr}\left(#1\right)}

 \newcommand{\define}{\triangleq}
% \newcommand{\conv}{\star} % convolution
 % convolution

\newcommand{\expected}[1]{\mathop{\textrm{E}}\brackets{#1} }
\newcommand{\expectednb}{\mathop{\textrm{E}}\nolimits}

%%%%%%%%%%%%%%%%%%%%%%%%%%%%%%%%%%%%%%%%%%%%%%%%%%%%

\newtheorem{myproposition}{Proposition}
\newtheorem{myremark}{Remark}
\newtheorem{myproblemstatement}{Problem Statement}

\newtheorem{mytheorem}{Theorem}
\newtheorem{mydefinition}{Definition}
\newtheorem{mycorollary}{Corollary}

\newtheorem{lemma}{Lemma}

\newcounter{exampleind}
\newcommand{\numberedexample}[1]{\noindent\textbf{Example\refstepcounter{exampleind} \label{#1} \theexampleind } }

\newcounter{remarkind}
\newcommand{\numberedremark}[1]{\noindent\textbf{Remark\refstepcounter{remarkind} \label{#1} \theremarkind}}

% editing commands

% General notation

\renewcommand{\define}{:=}
\newcommand{\columnspan}{\hc{\mathcal{R}}}
\DeclareMathOperator*{\argmin}{arg\,min}
\DeclareMathOperator*{\argmax}{arg\,max}
\renewcommand{\expected}[1]{\expectednb\left[#1\right]}  % iteration index notation
\renewcommand{\expectednb}{\hc{\mathbb{E}}}
\newcommand{\indicator}{\hc{\mathcal{I}}}

% Graph 
\newcommand{\vertexset}{\hc{\mathcal{V}}}
\newcommand{\edgeset}{\hc{\mathcal{E}}}
\newcommand{\vertexind}{{\hc{{n}}}}
\newcommand{\vertexindp}{{\hc{{n}'}}} % vertex index prime
\newcommand{\vertexindpp}{{\hc{{n}''}}} % vertex index prime prime
\newcommand{\vertexnum}{{\hc{{N}}}}
\newcommand{\weightfun}{\hc{w}} 
 
\newcommand{\featurevecdim}{{\hc{D}} }
\newcommand{\adjacencymat}{\hc{\bm W}} 
\newcommand{\laplacianevecmat}{\hc{\bm U}} 
\newcommand{\uslaplacianevecmat}{\hc{\tbm U}}  % unsorted
\newcommand{\laplacianevalmat}{\hc{\bm \Lambda}} 
\newcommand{\uslaplacianevalmat}{\hc{\tbm \Lambda}} % unsorted
\newcommand{\iblaplacianevalmat}{\hc{\bm \Lambda_B}}  % in band
\newcommand{\oblaplacianevalmat}{\hc{\bm \Lambda_B^c}}  % out of band
\newcommand{\laplacianevec}{\hc{\bm u}} 
\newcommand{\laplacianefun}{\hc{u}}  % eigenfunction
\newcommand{\oblaplacianevecmat}{\hc{{\bm U_B^c}}}  % out of band

% Signal
\newcommand{\signalfun}{\hc{f_0}} 
\newcommand{\signalvec}{\hc{\bm f_0}}

\newcommand{\signalestfun}{\hc{\hat f_0}} 
\newcommand{\signalestvec}{\hc{\hbm f_0}} 
\newcommand{\signalridgeestvec}{\hc{\hbm f}_\text{RR}} 
\newcommand{\signalridgeestfun}{\hc{\hat f}_\text{RR}} 
\newcommand{\signalridgesmoothvec}{\hc{\hbm f}_\text{RRS}} 
\newcommand{\signallmmseestvec}{\hc{\hbm f}_\text{LMMSE}} 

\newcommand{\samplingset}{\hc{\mathcal{S}}} 
\newcommand{\samplingmat}{\hc{\bm \Phi}} 
\newcommand{\sampleind}{{\hc{s}}} 
\newcommand{\samplenum}{{\hc{S}}} 
\newcommand{\observation}{\hc{y}} 
\newcommand{\observationvec}{\hc{\bm y}} 
 
\newcommand{\fourierobservation}{\hc{\tilde y}} 
\newcommand{\noisesamp}{{\hc{e}}} 
\newcommand{\noisevec}{\hc{\bm \noisesamp}} 
\newcommand{\noisevar}{\hc{ {\sigma^2_\noisesamp}}}  % 
\newcommand{\freqind}{\hc{m}}  % 
\newcommand{\freqindp}{\hc{m'}}  % 

% Estimation
\newcommand{\lossfun}{\hc{\mathcal{L}}} 
\newcommand{\rkhs}{{\hc{\mathcal{H}}}}
\newcommand{\rkhsfunsymbol}{f}
\newcommand{\rkhsvec}{\hc{\bm \rkhsfunsymbol}} 
\newcommand{\rkhsfun}{\hc{\rkhsfunsymbol}} 
\newcommand{\rkhsestfun}{\hc{\hat \rkhsfunsymbol}} 
 
\newcommand{\fourierrkhsfun}{\hc{\tilde \rkhsfunsymbol}} 
\newcommand{\fourierrkhsvec}{\hc{\tbm \rkhsfunsymbol}} 
 
\newcommand{\graphvariation}{\hc{\partial}}
\newcommand{\kernelmap}{\hc{\kappa}} 
\newcommand{\fullkernelmat}{\hc{\bbm K}} 
\newcommand{\fullkernelvec}{\hc{\bbm \kappa}} 
\newcommand{\fullkernelevalmat}{\hc{\bm \Lambda_{\fullkernelmat}}} 
\newcommand{\fullkernelevecmat}{\hc{\bm U_{\fullkernelmat}}} 
\newcommand{\samplekernelmat}{\hc{\bm K}} 
\newcommand{\fullalpha}{\hc{\bar \alpha}} % the ones in the exhaustive expansion
\newcommand{\fullalphavec}{\hc{\bbm \alpha}} 
\newcommand{\fullalphaestvec}{\hc{\hat{\bbm \alpha}}} 
\newcommand{\samplealpha}{\hc{\alpha}} % the ones in the sample expansion
\newcommand{\samplealphavec}{\hc{\bm \alpha}}
\newcommand{\samplealphaestvec}{\hc{\hbm \alpha}}

% Bandlimited
\newcommand{\blset}{{\hc{\mathcal{B}}}} % the ones for bandlimited signals
\newcommand{\blnum}{{\hc{B}}} % bandwidth
\newcommand{\blfouriervec}{\hc{{\tbm f}_\blset}} % 
\newcommand{\blselectionmat}{\hc{\bm \Psi}} 
\newcommand{\compblselectionmat}{\hc{\bm \Psi_c}}  % complementary
\newcommand{\bllaplacianevecmat}{\hc{\laplacianevecmat_{\blset}}} 
\newcommand{\signallsestvec}{\hc{\hbm f}_\text{LS}} 
\newcommand{\blfullkernelmat}{\hc{\bbm K_\beta}} 

% Graphical models
\renewcommand{\cov}{\hc{c}}  % 
\newcommand{\covvec}{\hc{\bm c}}  % 
\newcommand{\covmat}{\hc{\bm C}}  % 
\newcommand{\covmatest}{\hc{\hbm C}}  % 
\newcommand{\covcond}{\hc{c_{1|\rest}}}  % 
\newcommand{\rest}{\hc{2:\vertexnum}}  % 
\newcommand{\invcovmatent}{\hc{\gamma}}  %  entry of the inverse covariance matrix
  % 

% Multikernel
\newcommand{\rkhsnum}{{\hc{M}}}
\newcommand{\rkhsind}{{\hc{m}}}
\newcommand{\trsamplealphavec}{\hc{\check{\samplealphavec}}} %transformed
\newcommand{\fourieralphavec}{\hc{\tilde{\samplealphavec}}} %transformed
\newcommand{\fourieralpha}{\hc{{\tilde \alpha}}} %transformed
\newcommand{\softthresholding}{{\hc{\mathcal{T}}}}
\newcommand{\radius}{{\hc{R}}}
\newcommand{\kernelcoef}{{\hc{ \theta}}}
\newcommand{\kernelcoefvec}{{\hc{\bm \theta}}}

% Aux/local
\newcommand{\auxveco}{{\hc{\bm \beta}}} % aux vec 1
\newcommand{\auxvect}{{\hc{\bm \xi}}} % aux vec 2
\newcommand{\auxentt}{{\hc{\xi}}} % aux entry 2
\newcommand{\auxmat}{\hc{\bm G}}  % 
\newcommand{\auxmatt}{\hc{\bm M}}  %  Aux mat 2
\newcommand{\auxmatf}{\hc{\bm \Upsilon}}  %  Aux mat 4
\newcommand{\perpvec}{\hc{\bm \beta}}
\newcommand{\perpestvec}{\hc{\hbm \beta}}
\newcommand{\rotationmat}{\hc{\bm R}}  % 
\newcommand{\diagvec}{\hc{\bm d}}  % 
\newcommand{\diagent}{\hc{d}}  %  diagonal entry
\newcommand{\fourierdiagent}{\hc{D}}  %  

% smoothing
\newcommand{\smoothfun}{\hc{\tilde r}}  %  
\newcommand{\fourierfiltdiag}{\hc{\tilde h}}  %  
\newcommand{\filtercoef}{\hc{h}}  %  

% Algorithms
\newcommand{\iternot}[1]{^{\hc{(}{#1}\hc{)}}}  % iteration index notation
\newcommand{\iternotT}[1]{^{\hc{(}{#1}\hc{)},T}}  % iteration index notation + transpose
\newcommand{\regpar}{\hc{\mu}}

\usepackage{algorithmic}
\usepackage[ruled,vlined,resetcount]{algorithm2e}

\allowdisplaybreaks
\begin{document}
\title{Kernel-based Reconstruction of Graph Signals} %
\author{Daniel Romero, \emph{Member, IEEE}, Meng Ma, Georgios
  B. Giannakis, \emph{Fellow, IEEE} \thanks{This work was supported by
    ARO grant W911NF-15-1-0492 and NSF grants 1343248, 1442686, and
    1514056.

 The authors are with the Dept. of ECE and the Digital Tech. Center,
 Univ. of Minnesota, USA. E-mail: \{dromero,maxxx971,georgios\}@umn.edu.
% Parts of this work have been presented at the ...
}
}
\maketitle

% Edit mode
%\newcommand{\cmt}[1]{\textcolor{lightgreen}{\underline{[#1]}}} % comment
%\newcommand{\hc}[1]{\textcolor{blue}{#1}} % highlight command --> to
                                          % know which symbol is
                                          % defined as a command

% View mode
 \newcommand{\cmt}[1]{} % comment
 \newcommand{\hc}[1]{\textcolor{black}{#1}} % highlight command --> to
%                                           % know which symbol is
%                                           % defined as a command

\begin{abstract}

  A number of applications in engineering, social sciences, physics,
  and biology involve inference over networks. In this context, graph
  signals are widely encountered as descriptors of vertex attributes
  or features in graph-structured data.  Estimating such signals in
  all vertices given noisy observations of their values on a subset of
  vertices has been extensively analyzed in the literature of signal
  processing on graphs (SPoG).  This paper advocates kernel regression
  as a framework generalizing popular SPoG modeling and reconstruction
  and expanding their capabilities. Formulating signal reconstruction
  as a regression task on reproducing kernel Hilbert spaces of graph
  signals permeates benefits from statistical learning, offers fresh
  insights, and allows for estimators to leverage richer forms of
  prior information than existing alternatives. A number of SPoG
  notions such as bandlimitedness, graph filters, and the graph
  Fourier transform are naturally accommodated in the kernel
  framework. Additionally, this paper capitalizes on the so-called
  representer theorem to devise simpler versions of existing Thikhonov
  regularized estimators, and offers a novel probabilistic
  interpretation of kernel methods on graphs based on graphical
  models.  Motivated by the challenges of selecting the bandwidth
  parameter in SPoG estimators or the kernel map in kernel-based
  methods, the present paper further proposes two multi-kernel
  approaches with complementary strengths. Whereas the first enables
  estimation of the unknown bandwidth of bandlimited signals, the
  second allows for efficient graph filter selection.  Numerical tests
  with synthetic as well as real data demonstrate the merits of the
  proposed methods relative to state-of-the-art alternatives.

\end{abstract}
\begin{keywords}
  Graph signal reconstruction, kernel regression, multi-kernel
  learning.
\end{keywords}

\section{Introduction}
\label{sec:intro}

\cmt{Problem motivation}

Graph data play a central role in analysis and inference tasks for
social, brain, communication, biological, transportation, and sensor
networks~\cite{kolaczyck2009}, thanks to their ability to capture
relational information.  Vertex attributes or features associated with
vertices can be interpreted as functions or signals defined on graphs.
\cmt{example} In social networks for instance, where a vertex
represents a person and an edge corresponds to a friendship relation,
such a function may denote e.g. the person's age, location, or rating
of a given movie.

\cmt{Problem overview} Research efforts over the last years are
centered on estimating or processing functions on graphs; see
e.g. \cite{kondor2002diffusion,smola2003kernels,kolaczyck2009,shuman2013emerging,sandryhaila2013discrete,chapelle2006}.
\cmt{parsimony} Existing approaches rely on the premise that signals
obey a certain form of parsimony relative to the graph topology.
\cmt{example} For instance, it seems reasonable to estimate a
person's age by looking at their friends' age.  The present paper
deals with a general version of this task, where the goal is to
estimate a graph signal given noisy observations on a subset of
vertices.

\cmt{Literature} 

\cmt{Machine learning} The machine learning community has already
looked at SPoG-related issues in the context of \emph{semi-supervised
  learning} under the term of \cmt{name}\emph{transductive} regression
and
classification~\cite{chapelle2006,chapelle1999transductive,cortes2007transductive}.
\cmt{rely on}Existing approaches rely on smoothness assumptions
\cmt{tools} for inference of processes over graphs using
\emph{nonparametric}
methods~\cite{chapelle2006,kondor2002diffusion,smola2003kernels,belkin2006manifold}.
\cmt{mainly classification} Whereas some works consider estimation of
real-valued
signals~\cite{chapelle1999transductive,belkin2006manifold,cortes2007transductive,lafferty2007regression},
most in this body of literature have focused on estimating
binary-valued functions; see e.g.~\cite{chapelle2006}.  \cmt{SPoG}On
the other hand, function estimation has also been investigated
recently by the community of signal processing on graphs (SPoG)
\cmt{name}under the term \emph{signal
  reconstruction}~\cite{narang2013structured,narang2013localized,gadde2015probabilistic,tsitsvero2016uncertainty,chen2015theory,anis2016proxies,wang2015local,marques2015aggregations}.
\cmt{tools} Existing approaches commonly adopt \emph{parametric}
estimation tools \cmt{rely on}and rely on \emph{bandlimitedness}, by
which the signal of interest is assumed to lie in the span of the
$\blnum$ leading eigenvectors of the graph Laplacian or the adjacency
matrix~\cite{segarra2015percolation,tsitsvero2016uncertainty,anis2016proxies,narang2013localized,gadde2015probabilistic,wang2015local,marques2015aggregations}.
\cmt{mainly estimation} Different from machine learning works, 
SPoG research is mainly concerned with estimating real-valued
functions.

\cmt{Goal of paper} The present paper cross-pollinates ideas and
broadens both machine learning and SPoG perspectives under the
\emph{unifying} framework of kernel-based learning. \cmt{C1: kernel
  methods in SPoG}The first part unveils the implications of adopting
this standpoint and demonstrates how it naturally accommodates a
number of SPoG concepts and tools.  \cmt{benefits}From a high level,
this connection \cmt{bounds}(i) brings to bear performance bounds and
algorithms from transductive regression~\cite{cortes2007transductive}
and the extensively analyzed general kernel methods (see
e.g.~\cite{scholkopf2002}); \cmt{representer}(ii) offers the
possibility of reducing the dimension of the optimization problems
involved in Tikhonov regularized estimators by invoking the so-called
\emph{representer theorem}~\cite{kimeldorf1971spline}; and,
\cmt{covariance kernels}(iii) it provides guidelines for
systematically selecting parameters in existing signal reconstruction
approaches by leveraging the connection with linear minimum
mean-square error (LMMSE) estimation via \emph{covariance kernels}.

\cmt{kernel methods on graphs} Further implications of applying kernel
methods to graph signal reconstruction are also explored.  \cmt{novel
  proof}Specifically, it is shown that the finite dimension of graph
signal spaces allows for an insightful proof of the representer
theorem which, different from existing proofs relying on functional
analysis, solely involves linear algebra arguments.  \cmt{graphical
  models}Moreover, an intuitive probabilistic interpretation of graph
kernel methods is introduced based on graphical models. These findings
are complemented with a technique to deploy regression with Laplacian
kernels in big-data setups.

\cmt{generalizes} It is further established that a number of existing
signal reconstruction approaches, including the least-squares (LS)
estimators for bandlimited signals
from~\cite{narang2013localized,gadde2015probabilistic,tsitsvero2016uncertainty,chen2015theory,anis2016proxies,narang2013structured};
the Tikhonov regularized estimators
from~\cite{narang2013localized,belkin2004regularization,shuman2013emerging}
and~\cite[eq. (27)]{chen2015recovery}; and the maximum a posteriori
estimator in \cite{gadde2015probabilistic}, can be viewed as kernel
methods on \emph{reproducing kernel Hilbert spaces} (RKHSs) of graph
signals.  \cmt{notions}Popular notions in SPoG such as graph filters,
the graph Fourier transform, and bandlimited signals can also be
accommodated under the kernel framework.  \cmt{graph filter} First, it
is seen that a \emph{graph filter}~\cite{shuman2013emerging} is
essentially a kernel smoother~\cite{wahba1990splinemodels}.
\cmt{bandlimited signals}Second, \emph{bandlimited kernels} are
introduced to accommodate estimation of bandlimited signals.
\cmt{graph Fourier transform} Third, the connection between the
so-called \emph{graph Fourier transform}~\cite{shuman2013emerging}
(see~\cite{chen2015theory,sandryhaila2013discrete} for a related
definition) and Laplacian
kernels~\cite{kondor2002diffusion,smola2003kernels} is delineated.
Relative to methods relying on the bandlimited property (see
e.g.~\cite{narang2013localized,gadde2015probabilistic,tsitsvero2016uncertainty,chen2015theory,anis2016proxies,narang2013structured,wang2015local}),
kernel methods offer increased flexibility in leveraging prior
information about the graph Fourier transform of the estimated signal.

\cmt{C2: MKL} The second part of the paper pertains to the challenge
of model selection.  \cmt{Motivation: }\cmt{SPoG}On the one hand, a
number of reconstruction schemes in
SPoG~\cite{narang2013localized,gadde2015probabilistic,tsitsvero2016uncertainty,chen2015theory,wang2015local}
require knowledge of the signal bandwidth, which is typically
unknown~\cite{narang2013structured,anis2016proxies}. Existing
approaches for determining this bandwidth rely solely on the set of
sampled vertices, disregarding the
observations~\cite{narang2013structured,anis2016proxies}.
\cmt{Machine learning}On the other hand, existing kernel-based
approaches~\cite[Ch. 8]{kolaczyck2009} necessitate proper kernel
selection, which is computationally inefficient through
cross-validation.
  
\cmt{Goal} The present paper addresses both issues by means of two
multi-kernel learning (MKL) techniques having complementary strengths.
\cmt{literature} Heed existing MKL methods on graphs are confined to
estimating binary-valued
signals~\cite{zhu2004nonparametric,lanckriet2004semidefinite,cristianini2001alignment}. This
paper on the other hand, is concerned with MKL algorithms for
real-valued graph signal reconstruction.  \cmt{algorithms}The novel
graph MKL algorithms optimally combine the kernels in a given
dictionary and simultaneously estimate the graph signal by solving a
single optimization problem.
  
% \cmt{RKHS superposition} The first algorithm 
% \cmt{operation}seeks a sparse superposition of $\rkhsnum$ functions,
% each one belonging to a different RKHS.  \cmt{sparsity}A properly
% selected sparsity-enforcing criterion ensures that a small number of
% RKHSs are selected, which will be shown handy for estimating the
% bandwidth of bandlimited signals.  \cmt{implementation}An efficient
% implementation via the alternating-direction method of multipliers
% (ADMM) is
% presented~\cite{giannakis2016decentralized,bazerque2011splines}.
    
% \cmt{kernel superposition}The second algorithm seeks an optimum linear
% combination of pre-specified kernels.  \cmt{limitations}Different from
% the first algorithm, the estimation criterion is not convex and
% enforces no sparsity in the kernel expansion.  \cmt{benefits}In turn,
% it involves fewer optimization variables and lends itself to an
% efficient algorithm for optimal selection of graph filters in
% denoising tasks.

\cmt{Paper structure} The rest of the paper is structured as
follows. Sec.~\ref{sec:statement} formulates the problem of graph
signal reconstruction.  Sec.~\ref{sec:kernellearning} presents
kernel-based learning as an encompassing framework for graph signal
reconstruction, and explores the implications of adopting such a
standpoint.  Two MKL algorithms are then presented in
Sec.~\ref{sec:mkl}.  Sec.~\ref{sec:numericaltests} complements
analytical findings with numerical tests by comparing with competing
alternatives via synthetic- and real-data experiments. Finally, 
concluding remarks are highlighted in Sec.~\ref{sec:conclusions}.

\cmt{notation}
\noindent {\bf Notation.}  $(\cdot)_\vertexnum$ denotes the remainder
of integer division by $\vertexnum$; $\delta[\cdot]$ the Kronecker
delta, and $\indicator[C]$ the indicator of condition $C$, returning 1
if $C$ is satisfied and 0 otherwise.  Scalars are denoted by lowercase
letters, vectors by bold lowercase, and matrices by bold uppercase.
The $(i,j)$th entry of matrix $\bm A$ is $(\bm A)_{i,j}$. Notation
$||\cdot ||_2$ and $\tr{\cdot}$ respectively represent Euclidean norm
and trace; $\bm I_N$ denotes the $N \times N$ identity matrix; $\bm
e_i$ is the $i$-th canonical vector of $\mathbb{R}^M$, while $\bm 0$
($\bm 1$) is a vector of appropriate dimension with all (ones). The
span of the columns of $\bm A$ is denoted by $\columnspan\{\bm A\}$,
whereas $\bm A\succ \bm B$ (resp. $\bm A\succeq \bm B$) means that
$\bm A-\bm B$ is positive definite (resp. semi-definite).
Superscripts $^T$ and $~^\dagger$ respectively stand for transposition
and pseudo-inverse, whereas $\expectednb$ denotes expectation.

\section{Problem Statement}
\label{sec:statement}
\cmt{Definitions}

\cmt{graph} A graph is a tuple $\mathcal{G}:=(\vertexset,w)$, where
$\vertexset:=\{v_1, \ldots, v_N\}$ is the vertex set, and $w:
\vertexset \times \vertexset \rightarrow [0,+\infty)$ is a map
assigning a weight to each vertex pair. For simplicity, it is assumed
that $w(v,v)=0~\forall v\in \vertexset$.  This paper focuses on
\emph{undirected} graphs, for which $w(v,v') = w(v',v)~\forall v,v'\in
\vertexset$.  A graph is said to be \emph{unweighted} if $w(v,v')$ is
either 0 or 1.  \cmt{topology} The edge set $\edgeset $ is the support
of $w$, i.e., $\edgeset :=\{(v,v')\in \vertexset \times \vertexset:
w(v,v')\neq 0\}$.  Two vertices $v$ and $v'$ are \emph{adjacent},
\emph{connected}, or \emph{neighbors} if $(v,v')\in \edgeset$.  The
$n$-th neighborhood $\mathcal{N}_\vertexind$ is the set of neighbors
of $v_\vertexind$, i.e., $\mathcal{N}_\vertexind\define\{ v \in
\vertexset:(v,v_\vertexind)\in \edgeset \}$.  \cmt{algebraic defs} The
information in $w$ is compactly represented by the $N\times N$
weighted adjacency matrix $\adjacencymat$, whose
$(\vertexind,\vertexindp)$-th entry is
$w(v_\vertexind,v_\vertexindp)$; the $\vertexnum \times \vertexnum$
diagonal \emph{degree} matrix $\bm{D}$, whose
$(\vertexind,\vertexind)$-th entry is $\sum_{\vertexindp=1}^N
w(v_\vertexind,v_\vertexindp)$; and the \emph{Laplacian} matrix
$\bm{L}:=\bm{D}-\adjacencymat$, which is symmetric and positive
semidefinite~\cite[Ch. 2]{kolaczyck2009}. The latter is sometimes
replaced with its normalized version
$\bm{D}^{-1/2}\bm{L}\bm{D}^{-1/2}$, whose eigenvalues are confined to
the interval $[0,2]$.

\cmt{function} A real-valued function (or signal) on a graph is a map
$ \signalfun:\mathcal{V} \rightarrow \mathbb{R}$. As mentioned in
Sec.~\ref{sec:intro}, the value $\signalfun(v)$ represents an
attribute or feature of $v\in \vertexset$, such as age, political
alignment, or annual income of a person in a social network. Signal
$\signalfun$ is thus represented by
$\signalvec:=[\signalfun(v_1),\ldots,\signalfun(v_N)]^T$.

\cmt{Problem}

\cmt{observations} Suppose that a collection of noisy samples (or
observations) $y_s = \signalfun(v_{\vertexind_s}) + \noisesamp_s,
\quad s=1,\ldots,S$, is available, where $\noisesamp_s$ models noise
and $\mathcal{S}:=\{\vertexind_1,\ldots,\vertexind_S\}$ contains the
indices $1\leq \vertexind_1<\cdots<\vertexind_S\leq \vertexnum$ of the
sampled vertices. In a social network, this may be the case if a
subset of persons have been surveyed about the attribute of interest
(e.g. political alignment).  \cmt{goal}Given
$\{(\vertexind_\sampleind,y_\sampleind)\}_{\sampleind=1}^\samplenum$,
and assuming knowledge of $\mathcal{G}$, the goal is to estimate
$\signalfun$. This will provide estimates of $\signalfun(v)$ both at
observed and unobserved vertices $v\in \vertexset$.
\cmt{vector-matrix form} By defining $\observationvec \define
[y_{1},\ldots,y_{S}]^T$, the observation model is summarized as
\begin{equation}
\label{eq:observationsvec}
\observationvec =\samplingmat \signalvec  + \noisevec
\end{equation}
where $\noisevec \define [\noisesamp_{1},\ldots,\noisesamp_{S}]^T$ and $\samplingmat$ is
an $S\times N$ matrix with entries $(s,\vertexind_s)$,
$s=1,\ldots,S$, set to one, and the rest set to zero.

\section{Unifying the reconstruction of graph signals}
\label{sec:kernellearning}

\cmt{Introduction to kernel methods} Kernel methods constitute the
``workhorse'' of statistical learning for nonlinear function
estimation~\cite{scholkopf2002}.  Their popularity can be ascribed to
their simplicity, flexibility, and good performance.  \cmt{Section
  overview} This section presents kernel regression as a novel
unifying framework for graph signal reconstruction.

\cmt{RKHSs}

   \cmt{goal} Kernel regression seeks an estimate of
  $\signalfun$ in an RKHS $\rkhs$, which is the space of functions
  $\rkhsfun:\vertexset \rightarrow \rfield$ defined as
\begin{align}
\label{eq:rkhsdef}
\rkhs:=\left\{\rkhsfun:\rkhsfun(v) =
  \sum_{\vertexind=1}^{\vertexnum}\fullalpha_\vertexind
  \kernelmap(v,v_\vertexind),~\fullalpha_\vertexind\in \rfield \right\}.
\end{align}
\cmt{kernel} The \emph{kernel map} $\kernelmap:\vertexset\times
\vertexset\rightarrow \rfield$ is any function defining a symmetric
and positive semidefinite $N\times N$  matrix with entries
$[\fullkernelmat]_{\vertexind,\vertexindp} \define
\kernelmap(v_\vertexind,v_\vertexindp)~\forall
\vertexind,\vertexindp$~\cite{scholkopf2001representer}.  \cmt{kernel
  intuition}Intuitively, $\kernelmap(v,v')$ is a basis function in
\eqref{eq:rkhsdef} measuring similarity between the values of
$\signalfun$ at $v$ and $v'$.  For instance, if a \emph{feature
  vector} $\bm x_\vertexind\in \rfield^\featurevecdim$ containing
attributes of the entity represented by $v_\vertexind$ is known for
$\vertexind=1,\ldots,\vertexnum$, one can employ the popular
\emph{Gaussian kernel} $\kernelmap(v_\vertexind,v_\vertexindp)=\exp\{
-||\bm x_\vertexind-\bm x_\vertexindp||^2/\sigma^2\}$, where
$\sigma^2>0$ is a user-selected parameter~\cite{scholkopf2002}. When
such feature vectors $\bm x_\vertexind$ are not available, the graph
topology can be leveraged to construct graph kernels as detailed in
Sec.~\ref{sec:graphkernels}.

Different from RKHSs of functions $f(\bm x)$ defined over infinite
sets, the expansion in \eqref{eq:rkhsdef} is finite since $\vertexset$
is finite. This implies that RKHSs of graph signals are
finite-dimensional spaces.  \cmt{General form $\rkhsfun$ in graph RKHS
}From \eqref{eq:rkhsdef}, it follows that any signal in $\rkhs$ can be
expressed as:
\begin{align}
\label{eq:generalform}
\rkhsvec \define [\rkhsfun(v_1),\ldots,\rkhsfun(v_N)]^T = \fullkernelmat  \fullalphavec
\end{align}
for some $N\times 1$ vector $\fullalphavec\define[\fullalpha_1,\ldots,\fullalpha_\vertexnum]^T$.
 \cmt{Inner product}Given two functions $\rkhsfun(v) \define
  \sum_{\vertexind=1}^{\vertexnum}\fullalpha_\vertexind
  \kernelmap(v,v_\vertexind)$ and $\rkhsfun'(v) \define
  \sum_{\vertexind=1}^{\vertexnum}\fullalpha'_\vertexind
  \kernelmap(v,v_\vertexind)$, their RKHS inner product is defined
  as\footnote{Whereas $\rkhsfun$ denotes a \emph{function}, symbol
    $\rkhsfun(v)$ represents the \emph{scalar} resulting from
    evaluating $\rkhsfun$ at vertex  $v$. }
\begin{align}
\label{eq:innerproduct}
\langle \rkhsfun,\rkhsfun'\rangle_\mathcal{H} := \sum_{\vertexind=1}^{N}\sum_{\vertexindp=1}^{N}\fullalpha_\vertexind \fullalpha'_\vertexindp \kernelmap(
  v_\vertexind,v_\vertexindp) = \fullalphavec^T \fullkernelmat \fullalphavec'
\end{align}
where
$\fullalphavec'\define[\fullalpha'_1,\ldots,\fullalpha'_\vertexnum]^T$. 
\cmt{norm}The RKHS norm is defined by
\begin{align}
\label{eq:norm}
||\rkhsfun||_\mathcal{H}^2\define \langle \rkhsfun,\rkhsfun \rangle_\rkhs=
\fullalphavec^T \fullkernelmat \fullalphavec
\end{align}
and will be used as a regularizer to control overfitting.
\cmt{Intuition}As a special case, setting $\fullkernelmat=\bm I_N$
recovers the standard inner product $\langle
\rkhsfun,\rkhsfun'\rangle_\rkhs = \rkhsvec^T \rkhsvec'$, and Euclidean
norm $||\rkhsfun||_\rkhs^2 = ||\rkhsvec||_2^2$. Note that when
$\fullkernelmat\succ \bm 0$, the set of functions of the form
\eqref{eq:generalform} equals $\rfield^\vertexnum$. Thus, two RKHSs
with strictly positive definite kernel matrices contain the same
functions. They differ only in their RKHS inner products and
norms. Interestingly, this observation establishes that any positive
definite kernel is \emph{universal}~\cite{carmeli2010vector} for graph
signal reconstruction.

\cmt{reproducing property} The term \emph{reproducing kernel} stems
from the reproducing property. Let
$\kernelmap(\cdot,v_{\vertexind_0})$ denote the map $v
\mapsto \kernelmap(v,v_{\vertexind_0})$, where $\vertexind_0\in
\{1,\ldots,N\}$. Using \eqref{eq:innerproduct}, the reproducing
property can be expressed as $\langle
\kernelmap(\cdot,v_{\vertexind_0}),\kernelmap(\cdot,v_{\vertexind_0'})\rangle_\mathcal{H}
= \bm e_{\vertexind_0}^T \fullkernelmat \bm e_{\vertexind_0'} =
\kernelmap(v_{\vertexind_0},v_{\vertexind_0'})$. Due to the linearity
of inner products and the fact that all signals in $\rkhs$ are the
superposition of functions of the form
$\kernelmap(\cdot,v_{\vertexind})$, the reproducing property asserts
that inner products can be obtained just by evaluating
$\kernelmap$. The reproducing property is of paramount importance when
dealing with an  RKHS of functions defined on \emph{infinite}
spaces (thus excluding RKHSs of graph signals), since it offers an
efficient alternative to the costly multidimensional integration
required by inner products such as $\langle f_1,f_2\rangle_{L^2} :=
\int_\mathcal{X} f_1(\bm x)f_2(\bm x)d\bm x$.

\cmt{criterion}

\cmt{def} Given $\{y_\sampleind\}_{\sampleind=1}^\samplenum$,
RKHS-based function estimators are obtained by solving functional
minimization problems formulated as
\begin{equation}
	\signalestfun := \argmin_{\rkhsfun\in \mathcal{H}} \frac{1}{S} \sum_{s=1}^S
        \mathcal{L} (v_{\vertexind_s}, y_s, \rkhsfun(v_{\vertexind_s}))
+ \regpar \Omega(||\rkhsfun||_\mathcal{H})
	\label{eq:single-general}
\end{equation}
where the regularization parameter $\regpar>0$ controls overfitting,
the increasing function $\Omega$ is used to promote smoothness, and
the loss function $\mathcal{L}$ measures how estimates deviate from
the data. The so-called \emph{square loss}
$\mathcal{L}(v_{\vertexind_s},y_s,\rkhsfun(
v_{\vertexind_s}))\define\left[y_s-\rkhsfun(v_{\vertexind_s})\right]^2$
constitutes a popular choice for $\mathcal{L}$, whereas $\Omega$ is
often set to $\Omega(\zeta) = |\zeta|$ or $ \Omega(\zeta) = \zeta^2$.

\cmt{simplify notation} To simplify notation, consider loss functions
expressible as $\lossfun(v_{\vertexind_s},y_s,\rkhsfun(
v_{\vertexind_s}))=\lossfun(y_s-\rkhsfun(v_{\vertexind_s}))$;
extensions to more general cases are straightforward. The
vector-version of such a function is $ \mathcal{L} (\bm y -
\samplingmat \rkhsvec) := \sum_{s=1}^S
\lossfun(y_s-\rkhsvec(v_{\vertexind_s}))$.  \cmt{solution}Substituting
\eqref{eq:generalform} and \eqref{eq:norm} into
\eqref{eq:single-general} shows that $\signalestvec$ can be obtained
as $\signalestvec = \fullkernelmat\fullalphaestvec$, where
\begin{equation}
\fullalphaestvec := \argmin_{\fullalphavec\in \rfield^\vertexnum} \frac{1}{S} 
        \mathcal{L} (\bm  y - \samplingmat \fullkernelmat \fullalphavec)
+ \regpar \Omega((\fullalphavec^T \fullkernelmat \fullalphavec)^{1/2}).
	\label{eq:solutiongraphs}
\end{equation}
\cmt{Alternative form} An alternative form of
\eqref{eq:solutiongraphs} that will be frequently used in the sequel
results upon noting that $\fullalphavec^T \fullkernelmat \fullalphavec
= \fullalphavec^T \fullkernelmat \fullkernelmat^\dagger \fullkernelmat
\fullalphavec = \rkhsvec^T \fullkernelmat^\dagger \rkhsvec$. Thus, one
can rewrite \eqref{eq:solutiongraphs} as
\begin{equation}
\signalestvec := \argmin_{\rkhsvec\in \columnspan\{\fullkernelmat\}} \frac{1}{S} 
        \mathcal{L} (\bm  y - \samplingmat\rkhsvec)
+ \regpar \Omega((\rkhsvec^T \fullkernelmat^\dagger \rkhsvec)^{1/2}).
	\label{eq:solutiongraphsf}
\end{equation}
If $\fullkernelmat\succ \bm 0$, the constraint $\rkhsvec\in
\columnspan\{\fullkernelmat\}$ can be omitted, and $
\fullkernelmat^\dagger $ can be replaced with $\fullkernelmat\inv$. If
$\fullkernelmat$ contains null eigenvalues, it is customary to remove
the constraint by replacing $\fullkernelmat $ (or
$\fullkernelmat^\dagger $) with a perturbed version
$\fullkernelmat+\epsilon \bm I $ (respectively
$\fullkernelmat^\dagger+\epsilon \bm I $), where $\epsilon>0$ is a
small constant.  Expression \eqref{eq:solutiongraphsf} shows that
kernel regression unifies and subsumes the Tikhonov-regularized graph
signal reconstruction schemes
in~\cite{narang2013localized,belkin2004regularization,shuman2013emerging}
and~\cite[eq. (27)]{chen2015recovery} by properly selecting
$\fullkernelmat$, $\mathcal{L}$, and $\Omega$ (see
Sec.~\ref{sec:graphkernels}).

\subsection{Representer theorem}

\cmt{Motivation representer theorem} Although graph signals can be
reconstructed from \eqref{eq:solutiongraphs}, such an approach
involves optimizing over $N$ variables. This section shows that a
solution can be obtained by solving an optimization problem in $S$
variables, where typically $S\ll N$.

\cmt{infinite case} The representer
theorem~\cite{kimeldorf1971spline,scholkopf2001representer} plays an
instrumental role in the non-graph setting of infinite-dimensional
$\mathcal{H}$, where \eqref{eq:single-general} cannot be directly
solved. This theorem enables a solver by providing a finite
parameterization of the function $ \signalestfun$ in
\eqref{eq:single-general}.  \cmt{finite case}On the other hand, when
$\rkhs$ comprises graph signals,~\eqref{eq:single-general} is
inherently finite-dimensional and can be solved directly. However, the
representer theorem can still be beneficial to reduce the dimension of
the optimization in \eqref{eq:solutiongraphs}.

 \cmt{representer theorem} 
\begin{mytheorem}[Representer theorem]
  \thlabel{prop:representer} The solution to the functional
  minimization in \eqref{eq:single-general} can be expressed as
\begin{align}
\label{eq:representer}
\signalestfun(v) = \sum_{s=1}^{S} \samplealpha_s \kernelmap(v ,v_{\vertexind_s}) 
\end{align}
for some $\samplealpha_s \in \rfield$, $s=1,\ldots,S$.
\end{mytheorem}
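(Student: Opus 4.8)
The plan is to exploit the finite-dimensional vector form \eqref{eq:solutiongraphsf} of the problem and to reproduce the classical orthogonal-decomposition argument purely with matrix algebra, without invoking functional analysis. Define $\mathcal{C}\define \columnspan\{\fullkernelmat\samplingmat^T\}$, and observe that the columns of $\samplingmat^T$ are exactly the canonical vectors $\bm e_{\vertexind_s}$, so that the functions in \eqref{eq:representer} are precisely those whose vector representation $\rkhsvec=\sum_{s=1}^{S}\samplealpha_s\fullkernelmat\bm e_{\vertexind_s}=\fullkernelmat\samplingmat^T\samplealphavec$ lies in $\mathcal{C}$. It therefore suffices to show that \eqref{eq:solutiongraphsf} admits a minimizer in $\mathcal{C}$.

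First I would decompose an arbitrary feasible point $\rkhsvec\in\columnspan\{\fullkernelmat\}$ as $\rkhsvec=\rkhsvec_\parallel+\rkhsvec_\perp$, where $\rkhsvec_\parallel$ is the $\rkhs$-orthogonal projection of $\rkhsvec$ onto $\mathcal{C}$ and $\rkhsvec_\perp$ satisfies $\langle\rkhsvec_\perp,\fullkernelmat\bm e_{\vertexind_s}\rangle_\rkhs=0$ for $s=1,\ldots,S$, where $\langle\rkhsvec,\rkhsvec'\rangle_\rkhs\define\rkhsvec^T\fullkernelmat^\dagger\rkhsvec'$. This decomposition is legitimate because, although $\fullkernelmat^\dagger$ is only positive semidefinite on $\rfield^\vertexnum$, it restricts to a genuine (positive definite) inner product on $\columnspan\{\fullkernelmat\}$, so orthogonal projection and the Pythagorean identity are valid there.

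The argument then rests on two observations. For the loss, using that $\fullkernelmat^\dagger\fullkernelmat=\fullkernelmat\fullkernelmat^\dagger$ is the (symmetric) orthogonal projector onto $\columnspan\{\fullkernelmat\}$ and that $\rkhsvec_\perp$ already lies in that subspace, I would show $\langle\rkhsvec_\perp,\fullkernelmat\bm e_{\vertexind_s}\rangle_\rkhs=\rkhsvec_\perp^T\fullkernelmat^\dagger\fullkernelmat\bm e_{\vertexind_s}=\bm e_{\vertexind_s}^T\rkhsvec_\perp=(\samplingmat\rkhsvec_\perp)_s$; hence the orthogonality conditions are equivalent to $\samplingmat\rkhsvec_\perp=\bm 0$, so $\samplingmat\rkhsvec=\samplingmat\rkhsvec_\parallel$ and the loss $\mathcal{L}(\bm y-\samplingmat\rkhsvec)$ is unchanged. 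This is the discrete counterpart of the reproducing property: evaluation of $\rkhsvec$ at $v_{\vertexind_s}$ coincides with the RKHS inner product against $\fullkernelmat\bm e_{\vertexind_s}$. For the regularizer, the Pythagorean identity gives $\rkhsvec^T\fullkernelmat^\dagger\rkhsvec=\rkhsvec_\parallel^T\fullkernelmat^\dagger\rkhsvec_\parallel+\rkhsvec_\perp^T\fullkernelmat^\dagger\rkhsvec_\perp\geq\rkhsvec_\parallel^T\fullkernelmat^\dagger\rkhsvec_\parallel$, so the increasing $\Omega$ obeys $\Omega((\rkhsvec^T\fullkernelmat^\dagger\rkhsvec)^{1/2})\geq\Omega((\rkhsvec_\parallel^T\fullkernelmat^\dagger\rkhsvec_\parallel)^{1/2})$.

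Combining the two, replacing any feasible $\rkhsvec$ by its projection $\rkhsvec_\parallel\in\mathcal{C}$ never increases the objective of \eqref{eq:solutiongraphsf}; applying this to any minimizer yields a minimizer lying in $\mathcal{C}$, i.e., of the claimed form \eqref{eq:representer}. I expect the main obstacle to be the bookkeeping around a possibly singular $\fullkernelmat$: one must verify that $\fullkernelmat^\dagger$ acts as a true inner product on $\columnspan\{\fullkernelmat\}$ and that the projector identity $\fullkernelmat\fullkernelmat^\dagger=\fullkernelmat^\dagger\fullkernelmat$ holds (both following from symmetry of $\fullkernelmat$), since these are exactly what make the evaluation-functional computation above go through. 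When $\fullkernelmat\succ\bm 0$ the constraint $\rkhsvec\in\columnspan\{\fullkernelmat\}$ is vacuous and $\fullkernelmat^\dagger=\fullkernelmat\inv$, so the decomposition reduces to a standard orthogonal splitting of $\rfield^\vertexnum$.
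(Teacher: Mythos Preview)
Your proof is correct and follows the same overall strategy as the paper: decompose any feasible $\rkhsvec$ into a component in $\columnspan\{\fullkernelmat\samplingmat^T\}$ plus a remainder that vanishes on the sampled vertices, then observe that the loss is unchanged and the regularizer can only decrease. The difference lies in how the decomposition is obtained. The paper works in coefficient space, writing $\fullalphavec=\samplingmat^T\samplealphavec+\perpvec$ and proving existence of a suitable $\samplealphavec$ with $\samplingmat\fullkernelmat\perpvec=\bm 0$ via the range identity $\columnspan\{\samplingmat\fullkernelmat\}=\columnspan\{\samplingmat\fullkernelmat\samplingmat^T\}$, established through an eigendecomposition of $\fullkernelmat$. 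You instead work directly in $\rkhsvec$-space with the RKHS inner product $\langle\rkhsvec,\rkhsvec'\rangle_\rkhs=\rkhsvec^T\fullkernelmat^\dagger\rkhsvec'$ and obtain the decomposition by orthogonal projection, so existence is immediate once you note that $\fullkernelmat^\dagger$ restricts to a genuine inner product on $\columnspan\{\fullkernelmat\}$. Your route is closer in spirit to the classical functional-analytic representer-theorem proof, merely carried out with matrices; the paper's route avoids pseudoinverses altogether but needs the separate range-equality lemma. Both handle singular $\fullkernelmat$ correctly, and the resulting decompositions coincide (your condition $\samplingmat\rkhsvec_\perp=\bm 0$ is exactly the paper's $\samplingmat\fullkernelmat\perpvec=\bm 0$ with $\rkhsvec_\perp=\fullkernelmat\perpvec$).
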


\cmt{proof} The conventional proof for the representer theorem
involves tools from functional
analysis~\cite{scholkopf2001representer}. However, when $\rkhs$
comprises functions defined on finite spaces, such us graph signals,
an insightful proof can be obtained relying solely on linear algebra
arguments (see Appendix~\ref{appendix:representer}).

\cmt{interpretation} Since the solution $\signalestfun$ of
\eqref{eq:single-general} lies in $ \rkhs$, it can be expressed as
$\signalestfun = \sum_{\vertexind=1}^{N}\fullalpha_\vertexind
\kernelmap(v,v_\vertexind)$ for some
$\{\fullalpha_\vertexind\}_{\vertexind =1}^\vertexnum$.
\thref{prop:representer} states that the terms corresponding to
unobserved vertices $v_\vertexind$, $\vertexind \notin \samplingset$,
play no role in the kernel expansion of the estimate; that is,
$\fullalpha_\vertexind =0,~\forall\vertexind\notin \mathcal{S}$. Thus,
whereas \eqref{eq:solutiongraphs} requires optimization over
$\vertexnum$ variables, \thref{prop:representer} establishes that a
solution can be found by solving a problem in $\samplenum$ variables,
where typically $\samplenum\ll \vertexnum$. Clearly, this conclusion
carries over to the signal reconstruction schemes
in~\cite{narang2013localized,belkin2004regularization,shuman2013emerging}
and~\cite[eq. (27)]{chen2015recovery}, since they constitute special
instances of kernel regression.  \cmt{nonparametric name}The fact that
the number of parameters to be estimated after applying
\thref{prop:representer} depends on (in fact, equals) the number of
samples $\samplenum$ justifies why $ \signalestfun$ in
\eqref{eq:single-general} is referred to as a \emph{nonparametric
  estimate}.

\cmt{optimal alphas} \thref{prop:representer} shows the form of
$\signalestfun$ but does not provide the optimal
$\{\samplealpha_s\}_{s=1}^S$, which is found after substituting
\eqref{eq:representer} into \eqref{eq:single-general} and solving the
resulting optimization problem with respect to these coefficients. To
this end, let
$\samplealphavec:=[\samplealpha_1,\ldots,\samplealpha_\samplenum]^T$,
 and write  $\fullalphavec = \samplingmat^T \samplealphavec $
to deduce that 
\begin{align}
\label{eq:graphestimate}
\signalestvec= \fullkernelmat \fullalphavec
= \fullkernelmat \samplingmat^T \samplealphavec.
\end{align}
From \eqref{eq:solutiongraphs} and \eqref{eq:graphestimate}, the
optimal $\samplealphavec$ can be found as
\begin{equation}
\samplealphaestvec := \argmin_{\samplealphavec\in \rfield^\samplenum} \frac{1}{S} 
        \mathcal{L} (\bm  y - \samplekernelmat \samplealphavec)
+ \regpar \Omega((\samplealphavec^T \samplekernelmat \samplealphavec)^{1/2})
	\label{eq:solutiongraphsmatrixform}
\end{equation}
where $\samplekernelmat\define \samplingmat \fullkernelmat \samplingmat^T$.

\numberedexample{ex:ridgeregression} \emph{(kernel ridge regression)}.
\cmt{def} For $\mathcal{L}$ chosen as the square loss and
$\Omega(\zeta) =\zeta^2$, the $\signalestfun$ in
\eqref{eq:single-general} is referred to as the \emph{kernel ridge
  regression} estimate. It is given by $\signalridgeestvec =
\fullkernelmat \samplealphaestvec$, where
\begin{subequations}
	\label{eq:single-lq-m}
\begin{align}
	\label{eq:single-lq-m-a}
	\samplealphaestvec &:= \argmin_{\samplealphavec\in \rfield^\samplenum} \frac{1}{S} \left\| \bm{y} -
\samplekernelmat\bm{\samplealphavec} \right\|^2 + \regpar \bm{\samplealphavec}^T \samplekernelmat
\bm{\samplealphavec}\\
&= (\samplekernelmat + \regpar S \bm{I}_S )^{-1} \bm{y} .
\end{align}
\end{subequations}
Therefore, $\signalridgeestvec$ can be expressed as
\begin{align}
\label{eq:ridgeregressionestimate}
\signalridgeestvec
= \fullkernelmat \samplingmat^T (\samplekernelmat + \regpar S \bm{I}_S )^{-1} \bm{y}.
\end{align}
\cmt{KRR generalizes SPoG estimators} As seen in the next section,
\eqref{eq:ridgeregressionestimate} generalizes a number of existing
signal reconstructors upon properly selecting $\fullkernelmat$. Thus,
\thref{prop:representer} can also be used to simplify
Tikhonov-regularized estimators such as the one
in~\cite[eq. (15)]{narang2013localized}. To see this, just note that
\eqref{eq:ridgeregressionestimate} inverts an $\samplenum \times
\samplenum$ matrix whereas~\cite[eq. (16)]{narang2013localized}
entails the inversion of an $\vertexnum \times \vertexnum$ matrix.

\numberedexample{ex:svr} \emph{(support vector regression)}.  If
$\mathcal{L}$ equals the so-called $\epsilon$-insensitive loss
$\mathcal{L}(v_{\vertexind_s},y_s,\rkhsfun( v_{\vertexind_s}))\define
\max(0,|y_s-\rkhsfun( v_{\vertexind_s})|-\epsilon)$ and $\Omega(\zeta)
=\zeta^2$, then \eqref{eq:single-general} constitutes a support vector
machine for regression~(see e.g.~\cite[Ch. 1]{scholkopf2002}).

\subsection{Graph kernels for signal reconstruction}
\label{sec:graphkernels}

\cmt{Motivation} When estimating functions on graphs, conventional
kernels such as the aforementioned Gaussian kernel cannot be applied
because the underlying set where graph signals are defined is not a
metric space. Indeed, no vertex addition $v_{\vertexind} +
v_{\vertexind'}$, scaling $\beta v_{\vertexind}$, or norm
$||v_{\vertexind}||$ can be naturally defined on $\vertexset$.
\cmt{feature embedding} An alternative is to embed $\vertexset$ into
an Euclidean space via a feature map $\phi:\vertexset \rightarrow
\rfield^\featurevecdim$, and apply a conventional kernel afterwards.
However, for a given graph it is generally unclear how to design such
a map or select $\featurevecdim$, which motivates the adoption of
graph kernels~\cite{smola2003kernels}.  \cmt{Overview}The rest of this
section elaborates on three classes of graph kernels, namely
\emph{Laplacian}, \emph{bandlimited}, and novel \emph{covariance}
kernels for reconstructing  graph signals.

\subsubsection{Laplacian kernels}
\label{sec:laplaciankernels}

\cmt{overview} The term Laplacian kernel comprises a wide family of
kernels obtained by applying a certain function to the Laplacian matrix
$\bm L$. From a theoretical perspective, Laplacian kernels are well
motivated since they constitute the graph counterpart of the so-called
\emph{translation invariant kernels} in Euclidean
spaces~\cite{smola2003kernels}.  \cmt{this section}This section
reviews Laplacian kernels, provides novel insights in terms of
interpolating signals, and highlights their versatility in capturing
prior information about the \emph{graph Fourier transform} of the
estimated signal.

   \cmt{Definition} Let $0=\lambda_1\leq \lambda_2\leq \ldots\leq
    \lambda_N$ denote the eigenvalues of the graph Laplacian matrix
    $\bm L$, and consider the eigendecomposition $\bm
    L=\laplacianevecmat \bm \Lambda \laplacianevecmat^T$, where $\bm
    \Lambda\define\diag{\lambda_1,\ldots,\lambda_N}$. A Laplacian kernel is
    a kernel map $\kernelmap$ generating a matrix $\fullkernelmat$ of the
    form
\begin{align}
  \fullkernelmat\define r^\dagger(\bm L) \define \laplacianevecmat
  r^\dagger(\bm \Lambda) \laplacianevecmat^T
\label{eq:laplacian_kernel}
\end{align}
where $r(\bm \Lambda)$ is the result of applying the user-selected
non-negative map $r:\rfield\rightarrow \rfield_+$ to the diagonal
entries of $\bm \Lambda$.  \cmt{Selection of r} For reasons that will
become clear, the map $r(\lambda)$ is typically increasing in
$\lambda$. Common choices include the diffusion kernel
$r(\lambda)=\exp\{\sigma^2\lambda/2\}$~\cite{kondor2002diffusion}, and
the $p$-step random walk kernel $r(\lambda) =
(a-\lambda)^{-p}$,~$a\geq 2$~\cite{smola2003kernels}.  Laplacian
regularization~\cite{smola2003kernels,zhou2004regularization,belkin2006manifold,forero2014dictionary,shuman2013emerging}
is effected by setting $r(\lambda)=1 + \sigma^2\lambda$ with
$\sigma^2$ sufficiently large.

\cmt{big data} Observe that obtaining $\fullkernelmat$ generally
requires an eigendecomposition of $\bm L$, which is computationally
challenging for large graphs ($\vertexnum\gg$). Two techniques to
reduce complexity in these \emph{big data} scenarios are proposed in
Appendix~\ref{sec:bigdata}.

At this point, it is prudent to offer interpretations and insights
into the principles behind the operation of Laplacian kernels.
\cmt{rewriting regularizer}Towards this objective, note first that the
regularizer from~\eqref{eq:solutiongraphsf} is an increasing function
of
\begin{align}
\rkhsvec^T  \fullkernelmat^\dagger   \rkhsvec
= \rkhsvec^T \laplacianevecmat r(\bm \Lambda)
\laplacianevecmat^T \rkhsvec
= \fourierrkhsvec^T  r(\bm \Lambda)
 \fourierrkhsvec =
 \sum_{\vertexind=1}^Nr(\lambda_\vertexind)|\fourierrkhsfun_\vertexind|^2
\label{eq:fourierregularizer}
\end{align}
where $\fourierrkhsvec \define \laplacianevecmat^T \rkhsvec\define
[\fourierrkhsfun_1,\ldots,\fourierrkhsfun_N]^T $ comprises the
projections of $\rkhsvec$ onto the eigenvectors of $\bm L$, and is
referred to as the \emph{graph Fourier transform} of $\rkhsvec$ in the
SPoG parlance~\cite{shuman2013emerging}.  \cmt{interpretation of
  $\fourierrkhsvec$}Before interpreting \eqref{eq:fourierregularizer},
it is worth elucidating the rationale behind this term.  Since
$\laplacianevecmat\define[\laplacianevec_1,\ldots,\laplacianevec_\vertexnum]$
is orthogonal, one can decompose $\rkhsvec$ as
 \begin{align}
 \label{eq:gft}
  \rkhsvec =
   \sum_{\vertexind=1}^\vertexnum \fourierrkhsfun_\vertexind
   \laplacianevec_\vertexind.
 \end{align}
 Because vectors $\{ \laplacianevec_\vertexind\}_{\vertexind=1}^N$, or
 more precisely their signal counterparts $\{
 \laplacianefun_\vertexind\}_{\vertexind=1}^N$, are
 \emph{eigensignals} of the so-called \emph{graph shift operator}
 $\laplacianevec \mapsto \bm L\laplacianevec$, \eqref{eq:gft}
 resembles the classical Fourier transform in the sense that it
 expresses a signal as a superposition of \emph{eigensignals} of a
 Laplacian operator~\cite{shuman2013emerging}.  \cmt{interpretation of
   frequency} Recalling from Sec.~\ref{sec:statement} that
 $w(v_\vertexind,v_{\vertexind'})$ denotes the weight of the edge
 between $v_\vertexind$ and $v_\vertexindp$, one can consider the
 smoothness measure for graph functions $\rkhsfun$ given by
\begin{align*}
\graphvariation \rkhsfun \define \frac{1}{2} \sum_{\vertexind =1}^N
\sum_{\vertexind'\in \mathcal{N}_\vertexind}
w(v_\vertexind,v_{\vertexind'})[ \rkhsfun(v_\vertexind) -
  \rkhsfun(v_{\vertexind'})]^2 = \rkhsvec^T \bm L \rkhsvec
\end{align*}
where the last equality follows from the definition of $\bm L\define
\bm D-\adjacencymat$. Clearly, it holds $\graphvariation
\laplacianefun_\vertexind = \lambda_\vertexind$. Since
$0=\lambda_1\leq \lambda_2\leq \ldots\leq \lambda_N$, it follows that
$0=\graphvariation \laplacianefun_1 \leq \ldots\leq \graphvariation
\laplacianefun_\vertexnum$. In analogy to signal processing for time
signals, where lower frequencies correspond to smoother eigensignals,
the index $\vertexind$, or alternatively the eigenvalue
$\lambda_\vertexind$, is interpreted as the \emph{frequency} of
$\laplacianefun_\vertexind$.

\cmt{interpretation of regularizer} It follows from
\eqref{eq:fourierregularizer} that the regularizer
in~\eqref{eq:solutiongraphsf} strongly penalizes those
$\fourierrkhsfun_\vertexind$ for which the corresponding
$r(\lambda_\vertexind)$ is large, thus promoting a specific structure
in this frequency domain. Specifically, one prefers
$r(\lambda_\vertexind)$ to be large whenever $|\fourierrkhsfun_n|^2$
is small and vice versa. The fact that
$|\fourierrkhsfun_\vertexind|^2$ is expected to decrease with $\vertexind$
for smooth $\rkhsfun$, motivates the adoption of an increasing
$r$~\cite{smola2003kernels}. Observe that Laplacian kernels can
capture richer forms of prior information than the signal reconstructors
of bandlimited
signals in~\cite{narang2013localized,gadde2015probabilistic,tsitsvero2016uncertainty,chen2015theory,wang2015local,marques2015aggregations},
since the latter can solely capture the support of the Fourier
transform whereas the former can also leverage magnitude information.

\numberedexample{ex:circulargraphs} \emph{(circular graphs)}.
\cmt{Overview}This example capitalizes on \thref{prop:representer} to
present a novel SPoG-inspired intuitive interpretation of
nonparametric regression with Laplacian kernels.
% for signals on graphs by computing the \emph{interpolating
%   sequences} associated with a Laplacian kernel, and which play an
% analogous role to that played by the \emph{sinc function} in signal
% processing.
 \cmt{Def circular graph} To do so, a closed-form expression
for the Laplacian kernel matrix of a circular graph (or
ring) will be derived.   \cmt{Motivation
  circular graphs}This class of graphs has been commonly employed in
the literature to illustrate connections between SPoG and signal
processing of time-domain signals~\cite{sandryhaila2013discrete}.

\cmt{Laplacian} Up to vertex relabeling, an unweighted circular graph
satisfies $\weightfun(v_\vertexind,v_{\vertexind'})=\delta[
(\vertexind- \vertexind')_\vertexnum - 1]+\delta[ (\vertexind'-
\vertexind)_\vertexnum - 1]$.  Therefore, its Laplacian matrix can be
written as $\bm L = 2\bm I_\vertexnum - \rotationmat -
\rotationmat^T$, where $\rotationmat$ is the rotation matrix resulting
from circularly shifting the columns of $\bm I_\vertexnum$ one
position to the right, i.e.,
$(\rotationmat)_{\vertexind,\vertexindp}\define
\delta[(\vertexindp-\vertexind)_\vertexnum -1]$.  \cmt{Laplacian
  eigendecomposition} Matrix $\bm L$ is \emph{circulant} since its
$\vertexind$-th row can be obtained by circularly shifting the
$(\vertexind-1)$-st row one position to the right. Hence, $\bm L$ can
be diagonalized by the standard Fourier matrix~\cite{gray}, meaning
\begin{align}
\label{eq:circularevd}
\bm L = \uslaplacianevecmat \uslaplacianevalmat \uslaplacianevecmat^H
\end{align}
where $(\uslaplacianevecmat)_{\freqind,\freqindp}\define
(1/\sqrt{\vertexnum})\exp\{ j2\pi(\freqind-1)(\freqindp-1)/N \} $ is
the unitary inverse discrete Fourier transform matrix and
$(\uslaplacianevalmat)_{\freqind,\freqindp}\define
2[1-\cos(2\pi(\freqind-1)/\vertexnum)]\delta[\freqind-\freqindp]$.
Matrices $\uslaplacianevecmat$ and $\uslaplacianevalmat$ replace
$\laplacianevecmat$ and $\laplacianevalmat$ since, for notational
brevity, the eigendecomposition \eqref{eq:circularevd} involves
complex-valued eigenvectors and the eigenvalues have not been sorted
in ascending order.

 \cmt{Laplacian kernel as IDFT of spectrum} From
\eqref{eq:laplacian_kernel}, a Laplacian kernel matrix is given by
$\fullkernelmat \define \uslaplacianevecmat
r^\dagger(\uslaplacianevalmat) \uslaplacianevecmat^H \define
\uslaplacianevecmat \diag{\diagvec} \uslaplacianevecmat^H$, where
$\diagvec \define[\diagent_0,\ldots, \diagent_{\vertexnum-1}]^T$
has entries $\diagent_\vertexind =r^\dagger( 2[1-\cos(2\pi
\vertexind/\vertexnum)])$.
 \cmt{Entries of $\fullkernelmat$}It can be easily seen that 
$(\fullkernelmat)_{\freqind,\freqindp} = \fourierdiagent_{\freqind-\freqindp}$,
where 
\begin{align*}
\fourierdiagent_{m} &\define {\rm IDFT}\{\diagent_\vertexind\}
\define \frac{1}{\vertexnum}\sum_{\vertexind=0}^{\vertexnum-1} 
\diagent_\vertexind e^{
j\frac{2\pi}{\vertexnum}m\vertexind
}.
\end{align*}
If $r\left(2[1-\cos(2\pi \vertexind/\vertexnum)]\right)>0~\forall \vertexind$,
one has that
\begin{align}
\label{eq:idftcircular}
\fourierdiagent_{m} 
&=
\frac{1}{\vertexnum}\sum_{\vertexind=0}^{\vertexnum-1} 
\frac{e^{j\frac{2\pi}{\vertexnum}m\vertexind}}{
r\left(2[1-\cos(2\pi \vertexind/\vertexnum)]\right)
}.
\end{align}

\cmt{representer theorem \ra interpolating signals} Recall that
\thref{prop:representer} dictates $\signalestvec=
\sum_{\sampleind\in\samplingset}\fullalpha_\sampleind
\fullkernelvec_\sampleind $, where $\fullkernelmat \define[
\fullkernelvec_1,\ldots, \fullkernelvec_\vertexnum ]$. Since
$(\fullkernelmat)_{m,m'} = \fourierdiagent_{m-m'}$ and because
$\fourierdiagent_m$ is periodic in $m$ with period $\vertexnum$, it
follows that the vectors
$\{\fullkernelvec_\vertexind\}_{\vertexind=1}^\vertexnum$ are all
circularly shifted versions of each other. Moreover, since
$\fullkernelmat$ is positive semidefinite, the largest entry of
$\fullkernelvec_\sampleind$ is precisely the $\sampleind$-th one,
which motivates interpreting $\fullkernelvec_\sampleind$ as an
interpolating signal centered at $\sampleind$, which in turn suggests
that the expression $\signalestvec=
\sum_{\sampleind\in\samplingset}\fullalpha_\sampleind
\fullkernelvec_\sampleind $ can be thought of as a reconstruction
equation. From this vantage point, signals
$\{\fullkernelvec_\sampleind\}_{\sampleind \in \samplingset}$ play an
analogous role to sinc functions in signal processing of time-domain
signals. Examples of these interpolating signals are depicted in
Fig.~\ref{fig:interpolating}.

\begin{figure}[t]
 \centering \includegraphics[width=.45\textwidth]{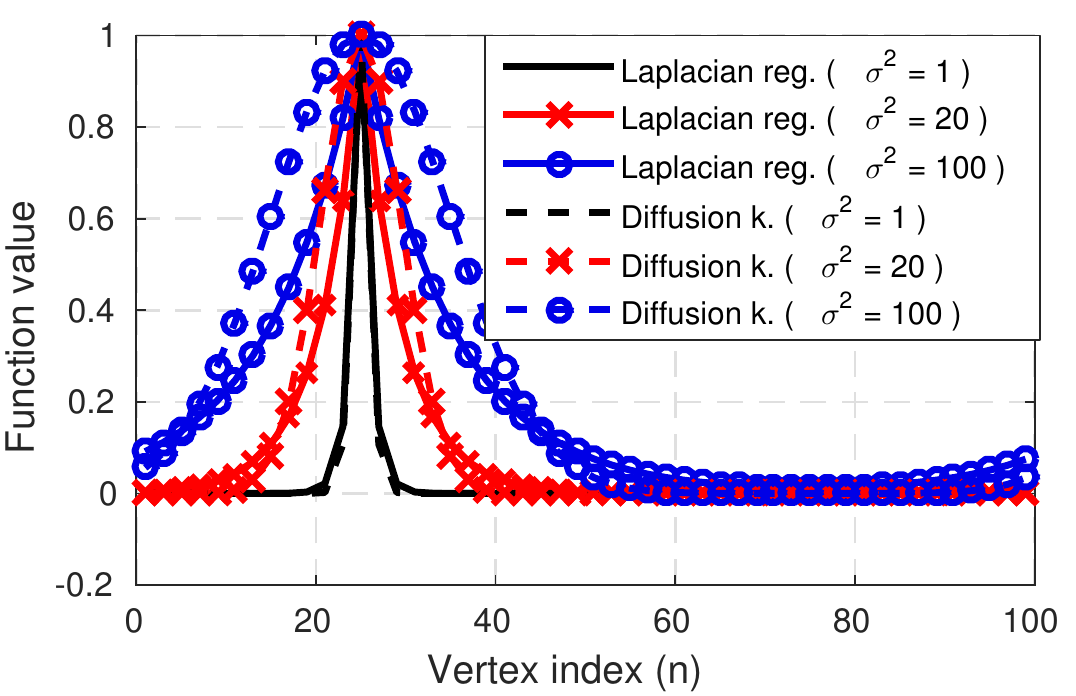}
 \caption{25-th column of $\fullkernelmat$ for a circular graph with
   $N = 100$ vertices. Different curves correspond to different
   parameter values for the Laplacian regularization and diffusion
   kernels.}
 \label{fig:interpolating}
\end{figure}

\subsubsection{Bandlimited kernels} 
\label{sec:bandlimited}

\cmt{Motivation} A number of signal reconstruction approaches in the
SPoG literature deal with graph bandlimited signals; see
e.g.~\cite{narang2013localized,gadde2015probabilistic,tsitsvero2016uncertainty,chen2015theory,wang2015local,anis2016proxies,marques2015aggregations,narang2013structured}.
\cmt{overview} Here, the notion of \emph{bandlimited kernel} is
introduced to formally show that the LS estimator for bandlimited
signals~\cite{narang2013localized,gadde2015probabilistic,tsitsvero2016uncertainty,chen2015theory,anis2016proxies,narang2013structured}
is a limiting case of the kernel ridge regression estimate
from~\eqref{eq:ridgeregressionestimate}. This notion will come handy
in Secs.~\ref{sec:mkl} and~\ref{sec:numericaltests} to estimate the
bandwidth of a bandlimited signal from the observations
$\{y_\sampleind\}_{\sampleind=1}^\samplenum$.

\cmt{def bandlimited} Signal $\rkhsfun$ is said to be
\emph{bandlimited} if it admits an expansion \eqref{eq:gft} with
$\fourierrkhsfun_n$ supported on a set $\blset \subset \{1,\ldots,
\vertexnum\}$; that is,
 \begin{align}
 \label{eq:defbandlimited}
  \rkhsvec =
   \sum_{\vertexind\in \blset} \fourierrkhsfun_\vertexind
   \laplacianevec_\vertexind   = \bllaplacianevecmat \blfouriervec
 \end{align}
 where $ \bllaplacianevecmat$ contains the columns of
 $\laplacianevecmat$ with indexes in $\blset$, and $\blfouriervec$ is a
 vector stacking $\{\fourierrkhsfun_\vertexind\}_{\vertexind\in
   \blset}$.  The \emph{bandwidth} of $\rkhsfun$ can be defined as the
 cardinality $\blnum\define|\blset|$, or, as the greatest element of
 $\blset$.

 \cmt{LS} If $\signalfun$ is bandlimited, it follows from
 \eqref{eq:observationsvec} that $ \observationvec=
 \samplingmat\signalvec + \noisevec=
 \samplingmat\bllaplacianevecmat\blfouriervec + \noisevec$ for some
 $\blfouriervec$. The LS estimate of $\signalvec$ is therefore given
 by~\cite{narang2013localized,gadde2015probabilistic,tsitsvero2016uncertainty,chen2015theory,anis2016proxies,narang2013structured}
\begin{subequations}
\label{eq:blestimate}
\begin{align}
  \signallsestvec &= \bllaplacianevecmat \argmin_{\blfouriervec\in \rfield^\blnum}||
\observationvec - \samplingmat\bllaplacianevecmat\blfouriervec||^2
\\
&=
\bllaplacianevecmat
[\bllaplacianevecmat^T\samplingmat^T\samplingmat\bllaplacianevecmat]\inv
\bllaplacianevecmat^T\samplingmat^T\observationvec 
\label{eq:blestimateb}
\end{align}
\end{subequations}
where the second equality assumes that
$\bllaplacianevecmat^T\samplingmat^T\samplingmat\bllaplacianevecmat$
is invertible, a necessary and sufficient condition for the $\blnum$
entries of $\blfouriervec$ to be identifiable.

\cmt{Connection with kernel regression} The estimate $
\signallsestvec$ in \eqref{eq:blestimate} can be accommodated in the
kernel regression framework by properly constructing a
\emph{bandlimited kernel}.  \cmt{Laplacian kernel} Intuitively, one
can adopt a Laplacian kernel for which $r(\lambda_\vertexind)$ is
large if $\vertexind\notin \blset$ (cf.
Sec.~\ref{sec:laplaciankernels}). Consider the Laplacian kernel
$\blfullkernelmat$ with
\begin{align}
\label{eq:defrbl}
r_\beta(\lambda_\vertexind) =  
\begin{cases}
1/\beta & \vertexind \in \blset\\
\beta & \vertexind \notin \blset.
\end{cases}
\end{align}
For large $\beta$, this function  strongly penalizes
$\{\fourierrkhsfun_\vertexind\}_{\vertexind\notin \blset}$
(cf.~\eqref{eq:fourierregularizer}), which promotes bandlimited
estimates. The reason for setting $r(\lambda_\vertexind) = 1/\beta$
for $\vertexind \in \blset$ instead of $r(\lambda_\vertexind) =0$ is
to ensure that $\blfullkernelmat$ is non-singular, a property that
simplifies the statement and the proofs of some of the results in this
paper.

\begin{myproposition}
  \thlabel{th:blandkernel} Let $\signalridgeestvec$ denote the kernel
  ridge regression estimate from \eqref{eq:ridgeregressionestimate}
  with kernel $\blfullkernelmat$ as in \eqref{eq:defrbl} and
  $\mu>0$. If
  $\bllaplacianevecmat^T\samplingmat^T\samplingmat\bllaplacianevecmat$
  is invertible, as required by the estimator in
  \eqref{eq:blestimateb} for bandlimited signals, then
  $\signalridgeestvec\rightarrow \signallsestvec$ as $\beta
  \rightarrow \infty$.
\end{myproposition}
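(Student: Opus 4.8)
The plan is to use the nonsingularity of $\blfullkernelmat$ to recast the kernel ridge regression estimate in its signal-domain (Tikhonov) form, and then to read off the $\beta\to\infty$ limit as a quadratic penalty method that asymptotically enforces bandlimitedness.

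First I would write $\blfullkernelmat=\beta\,\bllaplacianevecmat\bllaplacianevecmat^T+\frac{1}{\beta}\oblaplacianevecmat(\oblaplacianevecmat)^T$, so that $\blfullkernelmat^{-1}=\frac{1}{\beta}\bllaplacianevecmat\bllaplacianevecmat^T+\beta\,\oblaplacianevecmat(\oblaplacianevecmat)^T$, where $\oblaplacianevecmat$ collects the eigenvectors of $\bm L$ indexed outside $\blset$. Since $\blfullkernelmat\succ\bm 0$, \eqref{eq:ridgeregressionestimate} coincides with the unconstrained problem in \eqref{eq:solutiongraphsf} for the square loss and $\Omega(\zeta)=\zeta^2$, namely $\signalridgeestvec=\argmin_{\rkhsvec}\frac1\samplenum\|\observationvec-\samplingmat\rkhsvec\|^2+\regpar\,\rkhsvec^T\blfullkernelmat^{-1}\rkhsvec$, whose regularizer splits as $\frac{\regpar}{\beta}\|\bllaplacianevecmat^T\rkhsvec\|^2+\regpar\beta\|(\oblaplacianevecmat)^T\rkhsvec\|^2$. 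The key point is that the out-of-band Fourier content of $\rkhsvec$ is penalized with a diverging weight while the in-band content is penalized with a vanishing weight; call this objective $J_\beta(\rkhsvec)$.

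To make the limit precise I would decompose $\signalridgeestvec=\bllaplacianevecmat\hat{\bm a}(\beta)+\oblaplacianevecmat\hat{\bm b}(\beta)$ (legitimate since $\laplacianevecmat$ is orthogonal) and bound the two blocks separately. Evaluating $J_\beta$ at the feasible bandlimited point $\signallsestvec$, for which $(\oblaplacianevecmat)^T\signallsestvec=\bm 0$, gives $J_\beta(\signallsestvec)=\frac1\samplenum\|\observationvec-\samplingmat\signallsestvec\|^2+O(1/\beta)$, which stays bounded; optimality then yields $\regpar\beta\|\hat{\bm b}(\beta)\|^2\le J_\beta(\signalridgeestvec)\le J_\beta(\signallsestvec)$, forcing $\|\hat{\bm b}(\beta)\|^2=O(1/\beta)\to 0$. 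The boundedness of the data-fit term, together with the full column rank of $\bm A:=\samplingmat\bllaplacianevecmat$ (equivalent to invertibility of $\bm A^T\bm A=\bllaplacianevecmat^T\samplingmat^T\samplingmat\bllaplacianevecmat$), then forces $\hat{\bm a}(\beta)$ to remain bounded as well.

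Finally I would identify the limit by projecting the stationarity condition $(\samplingmat^T\samplingmat+\regpar\samplenum\,\blfullkernelmat^{-1})\signalridgeestvec=\samplingmat^T\observationvec$ onto $\columnspan\{\bllaplacianevecmat\}$ via $\bllaplacianevecmat^T$, which gives $\bm A^T\bm A\,\hat{\bm a}(\beta)+\bm A^T(\samplingmat\oblaplacianevecmat)\hat{\bm b}(\beta)+\frac{\regpar\samplenum}{\beta}\hat{\bm a}(\beta)=\bm A^T\observationvec$. Along any convergent subsequence the last two left-hand terms vanish, leaving $\bm A^T\bm A\,\bm a^\star=\bm A^T\observationvec$, i.e. $\bm a^\star=[\bm A^T\bm A]^{-1}\bm A^T\observationvec$; uniqueness of this solution upgrades subsequential convergence to convergence as $\beta\to\infty$, so $\signalridgeestvec\to\bllaplacianevecmat[\bllaplacianevecmat^T\samplingmat^T\samplingmat\bllaplacianevecmat]^{-1}\bllaplacianevecmat^T\samplingmat^T\observationvec=\signallsestvec$, matching \eqref{eq:blestimateb}. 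The main obstacle is precisely this interchange of minimization and limit: pointwise convergence of $J_\beta$ does not suffice, and one must rule out the in-band coefficients diverging or oscillating, which is exactly what the boundedness estimate plus the subsequence/uniqueness argument—both hinging on the invertibility hypothesis—deliver.
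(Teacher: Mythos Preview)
Your proof is correct and takes a genuinely different route from the paper's. The paper proceeds purely algebraically: it writes the ridge estimate in closed form as $\laplacianevecmat(\laplacianevecmat^T\samplingmat^T\samplingmat\laplacianevecmat+\regpar\samplenum\, r_\beta(\laplacianevalmat))^{-1}\laplacianevecmat^T\samplingmat^T\observationvec$, partitions the inner matrix into in-band/out-of-band blocks, applies a block-inversion (Schur complement) identity, and then checks block by block that the result converges to $\blselectionmat(\blselectionmat^T\auxmat\blselectionmat)^{-1}\blselectionmat^T$. Your argument instead treats $\beta$ as a quadratic-penalty parameter: you use the optimality inequality $J_\beta(\signalridgeestvec)\le J_\beta(\signallsestvec)$ to force the out-of-band component to vanish, use full column rank of $\samplingmat\bllaplacianevecmat$ to keep the in-band component bounded, and identify the limit from the projected stationarity equation. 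The paper's computation gives the finite-$\beta$ inverse explicitly (so one could read off rates), while your penalty-method viewpoint is more conceptual and would carry over with little change to losses or regularizers lacking a closed-form minimizer. Both hinge on the invertibility hypothesis at the same point: for the paper it makes the top-left block of the limit well defined, for you it delivers boundedness of $\hat{\bm a}(\beta)$ and uniqueness of the subsequential limit. One small remark: your subsequence step is harmless but not strictly needed, since the projected equation already expresses $(\bm A^T\bm A)\hat{\bm a}(\beta)$ as $\bm A^T\observationvec$ plus terms that tend to zero, so $\hat{\bm a}(\beta)\to(\bm A^T\bm A)^{-1}\bm A^T\observationvec$ directly.
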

\begin{IEEEproof}
See Appendix~\ref{sec:proof:th:blandkernel}. 
\end{IEEEproof}

\cmt{Interpretation} \thref{th:blandkernel} shows that the framework
of kernel-based regression subsumes LS estimation of bandlimited
signals. A non-asymptotic counterpart of \thref{th:blandkernel} can be
found by setting $r_\beta(\lambda_\vertexind) = 0$ for $\vertexind \in
\blset$ in \eqref{eq:defrbl}, and noting that $\signalridgeestvec =
\signallsestvec$ if $\mu=0$. Note however that imposing $\mu=0$
renders $\signalridgeestvec$ a degenerate kernel-based estimate.

\subsubsection{Covariance kernels} 
\label{sec:covkernels}
\cmt{Motivation} So far, signal $\signalfun$ has been assumed
deterministic, which precludes accommodating certain forms of prior
information that probabilistic models can capture, such as domain
knowledge and historical data.  \cmt{section overview}A probabilistic
interpretation of kernel methods on graphs will be pursued here to
show that: (i) the optimal $\fullkernelmat$ in the MSE sense for ridge
regression is the covariance matrix of $\signalvec$; and, (ii)
kernel-based ridge regression seeks an estimate satisfying a system of
local LMMSE estimation conditions on a Markov random
field~\cite[Ch. 8]{bishop2006}.

\cmt{Connection KRR and LMMSE}

\cmt{Defs} Suppose without loss of generality that
$\{\signalfun(v_\vertexind)\}_{\vertexind=1}^{\vertexnum}$ are
zero-mean random variables. \cmt{LMMSE estimator}The LMMSE estimator
of $\signalvec$ given $\observationvec$ is the linear estimator
$\signallmmseestvec$ minimizing $\expectednb||\signalvec-
\signallmmseestvec ||_2^2$, where the expectation is over all
$\signalvec$ and noise realizations. With $\covmat
\define\expected{\signalvec \signalvec^T}$, the LMMSE estimate is
given by
\begin{align}
\label{eq:lmmseestimator}
\signallmmseestvec = \covmat \samplingmat^T[\samplingmat \covmat
\samplingmat^T + \noisevar \bm I_S]\inv \observationvec
\end{align}
where $\noisevar\define ({1}/{S})\expected{||\noisevec||_2^2}$ denotes
the noise variance.  \cmt{connection} Comparing
\eqref{eq:lmmseestimator} with \eqref{eq:ridgeregressionestimate} and
recalling that $\samplekernelmat\define \samplingmat \fullkernelmat
\samplingmat^T$, it follows that $\signallmmseestvec =
\signalridgeestvec$ with $\mu S = \noisevar$ and
$\fullkernelmat = \covmat$. In other words, the similarity measure
$\kernelmap(v_\vertexind,v_\vertexindp)$ embodied in the kernel map is
just the covariance $\mathop{\rm
  cov}[\signalfun(v_\vertexind),\signalfun(v_\vertexindp)]$.
\cmt{relative to~\cite{bazerque2013basispursuit}}A related
observation was pointed out in~\cite{bazerque2013basispursuit} for
general kernel methods.

\cmt{Generalizes \cite{gadde2015probabilistic}} In short, one can
interpret kernel ridge regression as the LMMSE estimator of a signal
$\signalvec$ with covariance matrix equal to $\fullkernelmat$. This
statement generalizes \cite[Lemma 1]{gadde2015probabilistic}, which
requires $\signalvec$ to be Gaussian, $\covmat$ rank-deficient, and
$\noisevar = 0$.

\cmt{Covariance kernel optimum in the MMSE sense} Recognizing that
kernel ridge regression is a linear estimator, readily establishes the
following result.
\begin{myproposition}
  \thlabel{prop:covkernel} If
  $\text{MSE}(\fullkernelmat,\regpar)\define \expectednb[
  ||\signalvec-\signalridgeestvec(\fullkernelmat,\regpar) ||^2 ]$,
  where $\signalridgeestvec(\fullkernelmat,\regpar)$ denotes the
  estimator in \eqref{eq:ridgeregressionestimate}, with kernel
  matrix $\fullkernelmat$, and regularization parameter $\regpar$, it
  then holds that
\begin{align*}
\text{MSE}(\covmat,\noisevar/\samplenum)\leq
\text{MSE}(\fullkernelmat,\regpar)
\end{align*}
for all kernel matrices $\fullkernelmat$ and $\regpar>0$. 
\end{myproposition}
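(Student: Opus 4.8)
The plan is to exploit the observation made just above the statement, namely that kernel ridge regression is a \emph{linear} estimator and that, for the specific parameter choice $\fullkernelmat=\covmat$ and $\regpar=\noisevar/\samplenum$, it coincides with the LMMSE estimator $\signallmmseestvec$ of \eqref{eq:lmmseestimator}. Since the LMMSE estimator is, by construction, the minimizer of the mean-square error over the \emph{entire} class of linear estimators, it must dominate every other member of that class; and every kernel ridge regression estimator belongs to that class. The inequality then follows immediately.

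First I would make the linearity explicit. Writing \eqref{eq:ridgeregressionestimate} as $\signalridgeestvec(\fullkernelmat,\regpar)=\bm A(\fullkernelmat,\regpar)\,\observationvec$ with the deterministic matrix $\bm A(\fullkernelmat,\regpar)\define\fullkernelmat\samplingmat^T(\samplekernelmat+\regpar\samplenum\bm I_\samplenum)\inv$, one sees that $\bm A$ depends only on the known quantities $\fullkernelmat$, $\regpar$, and $\samplingmat$, and not on the realization of $\observationvec$ or $\signalvec$. Hence, as $\fullkernelmat$ ranges over all admissible (positive semidefinite) kernel matrices and $\regpar$ over $(0,\infty)$, the estimators $\signalridgeestvec(\fullkernelmat,\regpar)$ form a subfamily of the class of all linear estimators $\{\bm B\,\observationvec:\bm B\in\rfield^{\vertexnum\times\samplenum}\}$.

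Next I would invoke the optimality of the LMMSE estimator within that class. The mean-square error $\expected{\|\signalvec-\bm B\,\observationvec\|_2^2}$ is a convex quadratic in the entries of $\bm B$, and its global minimizer is the LMMSE matrix, which under the zero-mean and uncorrelated-noise assumptions of this section yields exactly the estimator $\signallmmseestvec$ in \eqref{eq:lmmseestimator}; this is the classical orthogonality principle. Consequently $\expected{\|\signalvec-\signallmmseestvec\|_2^2}\leq\expected{\|\signalvec-\bm B\,\observationvec\|_2^2}$ for every $\bm B$, and in particular for every $\bm B=\bm A(\fullkernelmat,\regpar)$.

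Finally, the correspondence established before the statement gives $\signalridgeestvec(\covmat,\noisevar/\samplenum)=\signallmmseestvec$, so that $\text{MSE}(\covmat,\noisevar/\samplenum)=\expected{\|\signalvec-\signallmmseestvec\|_2^2}$ is precisely the minimum attained over the linear class. Combining with the previous step yields $\text{MSE}(\covmat,\noisevar/\samplenum)\leq\text{MSE}(\fullkernelmat,\regpar)$ for every $\fullkernelmat$ and every $\regpar>0$, which is the claim. The only point requiring care, and where I expect the main (if modest) obstacle to lie, is the optimality step: one must confirm that the LMMSE estimator minimizes over \emph{all} linear maps $\bm B$ rather than merely over the kernel-ridge subfamily, and that the zero-mean and uncorrelated-noise hypotheses indeed collapse the best affine estimator to the purely linear form in \eqref{eq:lmmseestimator}. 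Everything else is a direct substitution.
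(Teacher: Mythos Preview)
Your proposal is correct and is precisely the argument the paper has in mind: the paper does not give a separate proof but simply states that ``recognizing that kernel ridge regression is a linear estimator readily establishes the following result,'' which is exactly the three-step reasoning you spell out (linearity of $\signalridgeestvec(\fullkernelmat,\regpar)$, optimality of the LMMSE estimator over all linear maps, and the identification $\signalridgeestvec(\covmat,\noisevar/\samplenum)=\signallmmseestvec$).
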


\cmt{Implications} Thus, for criteria aiming to minimize the MSE,
\thref{prop:covkernel} suggests $\fullkernelmat$ to be chosen
\emph{close} to $\covmat$. This observation may be employed for kernel
selection and for parameter tuning in graph signal reconstruction
methods of the kernel ridge regression family (e.g. the Tikhonov
regularized estimators
from~\cite{narang2013localized,belkin2004regularization,shuman2013emerging}
and~\cite[eq. (27)]{chen2015recovery}), whenever an estimate of
$\covmat$ can be obtained from historical data. For instance, the
function $r$ involved in Laplacian kernels can be chosen such that
$\fullkernelmat$ resembles $\covmat$ in some sense. Investigating such
approaches goes beyond the scope of this paper.

\cmt{Interpretation on MRF} A second implication of the connection
between kernel ridge regression and LMMSE estimation involves signal
estimation on Markov random fields~\cite[Ch. 8]{bishop2006}.
\cmt{MRF} In this class of graphical models, an edge connects
$v_\vertexind$ with $v_\vertexindp$ if $\signalfun(v_\vertexind)$ and
$\signalfun(v_\vertexindp)$ are \emph{not} independent given
$\{\signalfun(v_\vertexindpp)\}_{ \vertexindpp \neq
  \vertexind,\vertexindp}$.  Thus, if $v_\vertexindp\notin
\mathcal{N}_\vertexind$, then $\signalfun(v_\vertexind)$ and
$\signalfun(v_\vertexindp)$ are independent given
$\{\signalfun(v_\vertexindpp)\}_{ \vertexindpp \neq
  \vertexind,\vertexindp}$. In other words, when
$\signalfun(v_\vertexindpp)$ is known for all neighbors $v_\vertexindpp\in
\mathcal{N}_\vertexind$, function values at non-neighboring vertices
do not provide further information. This spatial Markovian property
motivates the name of this class of graphical models. Real-world
graphs obey this property when the topology captures direct
interaction, in the sense that the interaction between the entities
represented by two non-neighboring vertices $v_\vertexind$ and
$v_\vertexindp$ is necessarily through vertices in a \emph{path}
connecting $v_\vertexind$ with~$v_\vertexindp$.

\cmt{proposition}
  \begin{myproposition}
    \thlabel{prop:markov} Let $\mathcal{G}$ be a Markov random field,
    and consider the estimator in \eqref{eq:ridgeregressionestimate}
    with $\fullkernelmat = \covmat \define\expected{\signalvec
      \signalvec^T}$, and $\regpar = \noisevar/\samplenum$. Then, it
    holds that
%    $\signalridgeestvec$ satisfies the following:
    \begin{align}
      \label{eq:ridgemarkov}
      \signalridgeestfun(v_\vertexind) =
      \begin{cases}
        \text{LMMSEE}\left[\signalfun(v_\vertexind)\Big|
          \{\signalridgeestfun(v)\}_{v \in
            \mathcal{N}_\vertexind}\right]&\text{if } \vertexind \notin \samplingset\\
        y_{\sampleind(\vertexind)} -  \hat \noisesamp_{\sampleind(\vertexind)}&\text{if } \vertexind \in \samplingset
      \end{cases}
    \end{align}
    for $\vertexind=1,\ldots,\vertexnum$, where
    $\sampleind(\vertexind)$ denotes the sample index of the observed
    vertex $v_\vertexind$, i.e., $y_{\sampleind(\vertexind)} =
    \signalfun(v_\vertexind)+ \noisesamp_{\sampleind(\vertexind)}$,
    and
    \begin{align*}
      \hat \noisesamp_{\sampleind(\vertexind)} =
      \frac{\noisevar}{\sigma^2_{\vertexind|\mathcal{N}_\vertexind}}
      \Big[\signalridgeestfun(v_\vertexind) -
      \text{LMMSEE}\left[\signalfun(v_\vertexind)\Big|
        \{\signalridgeestfun(v)\}_{v \in
          \mathcal{N}_\vertexind} \right]\Big].
    \end{align*}
    Here, $\text{LMMSEE}[\signalfun(v_\vertexind)|
    \{\signalridgeestfun(v)\}_{v \in
      \mathcal{N}_\vertexind}]$ is the LMMSE estimator of
    $\signalfun(v_\vertexind)$ given
    $\signalfun(v_\vertexindp)=\signalridgeestfun(v_\vertexindp)$,
    $v_\vertexindp \in \mathcal{N}_\vertexind$, and
    $\sigma^2_{\vertexind|\mathcal{N}_\vertexind}$ is the variance of this
    estimator.
  \end{myproposition}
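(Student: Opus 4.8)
The plan is to exploit the identity, already established in the text preceding the statement, that with $\fullkernelmat=\covmat$ and $\regpar \samplenum = \noisevar$ the kernel ridge regression estimate coincides with the LMMSE estimate, $\signalridgeestvec=\signallmmseestvec$. Rather than working with the ``covariance form'' \eqref{eq:lmmseestimator}, I would pass to the equivalent \emph{information form}. Assuming $\covmat\succ\bm 0$ and applying the matrix inversion lemma to \eqref{eq:lmmseestimator} gives $\covmat\samplingmat^T(\samplingmat\covmat\samplingmat^T+\noisevar\bm I_S)^{-1}=(\covmat^{-1}+\noisevar^{-1}\samplingmat^T\samplingmat)^{-1}\noisevar^{-1}\samplingmat^T$, so that $\signalridgeestvec$ is the unique solution of the linear system
\[
\left(\covmat^{-1}+\frac{1}{\noisevar}\samplingmat^T\samplingmat\right)\signalridgeestvec=\frac{1}{\noisevar}\samplingmat^T\observationvec .
\]
Here $\samplingmat^T\samplingmat$ is the diagonal $0/1$ matrix selecting the sampled coordinates, and $\samplingmat^T\observationvec$ carries $y_{\sampleind(\vertexind)}$ at $\vertexind\in\samplingset$ and $0$ elsewhere. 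The whole argument then amounts to reading this system off one row at a time.

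The structural input that makes the two branches appear is the sparsity of the precision matrix induced by the Markov property: writing $\invcovmatent_{\vertexind,\vertexindp}\define(\covmat^{-1})_{\vertexind,\vertexindp}$, I would argue that $\invcovmatent_{\vertexind,\vertexindp}=0$ whenever $\vertexindp\neq\vertexind$ and $v_\vertexindp\notin\mathcal{N}_\vertexind$. This follows from the Gauss--Markov correspondence specialized to second-order statistics: by block inversion of $\covmat$ (Schur complements), the LMMSE estimator of $\signalfun(v_\vertexind)$ from all remaining vertices is $-\invcovmatent_{\vertexind,\vertexind}^{-1}\sum_{\vertexindp\neq\vertexind}\invcovmatent_{\vertexind,\vertexindp}\signalfun(v_\vertexindp)$, with error variance $1/\invcovmatent_{\vertexind,\vertexind}$; the MRF definition given in the text forces the coefficient of every non-neighbor to vanish. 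Because the resulting estimator uses only neighbor values, and optimality over the larger set of all-other-vertices implies optimality over the subset of neighbors, this same expression is the LMMSE estimator given only the neighbors, with $\sigma^2_{\vertexind|\mathcal{N}_\vertexind}=1/\invcovmatent_{\vertexind,\vertexind}$. In particular, for any vector $\signalridgeestvec$,
\[
\text{LMMSEE}\!\left[\signalfun(v_\vertexind)\,\big|\,\{\signalridgeestfun(v)\}_{v\in\mathcal{N}_\vertexind}\right]=-\frac{1}{\invcovmatent_{\vertexind,\vertexind}}\sum_{\vertexindp\in\mathcal{N}_\vertexind}\invcovmatent_{\vertexind,\vertexindp}\,\signalridgeestfun(v_\vertexindp).
\]

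With these two ingredients, the cases drop out by expanding the $\vertexind$-th row of the linear system and eliminating all non-neighbor terms via the sparsity above. For $\vertexind\notin\samplingset$ the selection term and the right-hand side both vanish, leaving $\invcovmatent_{\vertexind,\vertexind}\signalridgeestfun(v_\vertexind)+\sum_{\vertexindp\in\mathcal{N}_\vertexind}\invcovmatent_{\vertexind,\vertexindp}\signalridgeestfun(v_\vertexindp)=0$; dividing by $\invcovmatent_{\vertexind,\vertexind}$ and invoking the displayed neighbor-LMMSE identity yields exactly the first branch. For $\vertexind\in\samplingset$ the row reads $\invcovmatent_{\vertexind,\vertexind}\signalridgeestfun(v_\vertexind)+\sum_{\vertexindp\in\mathcal{N}_\vertexind}\invcovmatent_{\vertexind,\vertexindp}\signalridgeestfun(v_\vertexindp)+\noisevar^{-1}\signalridgeestfun(v_\vertexind)=\noisevar^{-1}y_{\sampleind(\vertexind)}$; substituting the neighbor sum by $-\invcovmatent_{\vertexind,\vertexind}\,\text{LMMSEE}[\cdots]$, multiplying through by $\noisevar$, and solving for $\signalridgeestfun(v_\vertexind)$ gives $\signalridgeestfun(v_\vertexind)=y_{\sampleind(\vertexind)}-\noisevar\invcovmatent_{\vertexind,\vertexind}[\signalridgeestfun(v_\vertexind)-\text{LMMSEE}[\cdots]]$. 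Since $\noisevar\invcovmatent_{\vertexind,\vertexind}=\noisevar/\sigma^2_{\vertexind|\mathcal{N}_\vertexind}$, this is precisely the second branch with the stated $\hat\noisesamp_{\sampleind(\vertexind)}$.

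The one step requiring genuine care, rather than bookkeeping, is the middle paragraph: justifying precision-matrix sparsity from the conditional-independence definition of an MRF while staying inside the purely second-order (LMMSE) world. The cleanest route is to observe that only the second-order statistics of $\signalvec$ enter both the MRF condition as used here and the estimator, so the classical Gaussian precision-matrix characterization transfers verbatim; the matrix inversion lemma and the row expansion are then routine.
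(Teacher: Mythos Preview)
Your proposal is correct and follows essentially the same route as the paper's proof. Both arguments reduce to the linear system $(\covmat^{-1}+\noisevar^{-1}\samplingmat^T\samplingmat)\signalridgeestvec=\noisevar^{-1}\samplingmat^T\observationvec$ and then read it off one row at a time, using the MRF-induced sparsity of $\covmat^{-1}$ together with the Schur-complement identification of the neighbor-LMMSE coefficients and error variance. The only cosmetic difference is that the paper obtains this system as the first-order optimality condition of the ridge-regression problem in the $\rkhsvec$-parameterization, whereas you reach it via the matrix inversion lemma applied to \eqref{eq:lmmseestimator}; the two derivations are equivalent one-liners and the remainder of the argument is identical.
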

  \begin{proof}
    See Appendix~\ref{app:proof:prop:markov}. 
  \end{proof}

 \cmt{interpretation}

 If a (noisy) observation of $\signalfun$ at $v_\vertexind$ is not
 available, i.e. $\vertexind \notin \samplingset$, then kernel ridge
 regression finds $\signalridgeestfun(v_\vertexind)$ as the LMMSE
 estimate of $\signalfun(v_\vertexind)$ given function values at the
 neighbors of $v_\vertexind$. However, since the latter are not
 directly observable, their ridge regression estimates are used
 instead.  Conversely, when $v_\vertexind$ is observed, implying that
 a sample $y_{\sampleind(\vertexind)}$ is available, the sought
 estimator subtracts from this value an estimate $\hat
 \noisesamp_{\sampleind(\vertexind)}$ of the observation noise
 $\noisesamp_{\sampleind(\vertexind)}$.  Therefore, the kernel
 estimate on a Markov random field seeks an estimate satisfying the
 system of \emph{local LMMSE conditions} given by
 \eqref{eq:ridgemarkov} for $\vertexind =1,\ldots,\vertexnum$.

 \numberedremark{rem:relaxmrf}\textbf{.} \cmt{relax $\mathcal{G}$ is
   MRF} In \thref{prop:markov}, the requirement that $\mathcal{G}$ is
 a Markov random field can be relaxed to that of being a
 \emph{conditional correlation graph}, defined as a graph where
 $(v_\vertexind,v_\vertexindp)\in \edgeset$ if
 $\signalfun(v_\vertexind)$ and $\signalfun(v_\vertexindp)$ are
 correlated given $\{\signalfun(v_\vertexindpp)\}_{\vertexindpp\neq
   \vertexind,\vertexindp}$. Since correlation implies dependence, any
 Markov random field is also a conditional correlation graph. A
 conditional correlation graph can be constructed from $\covmat\define
 \expectednb[\signalvec\signalvec^T]$ by setting $ \edgeset
 =\{(v_\vertexind,v_\vertexindp):(\covmat\inv)_{\vertexind,\vertexindp}\neq
 0\} $ (see e.g.~\cite[Th. 10.2]{kay1}).

 \numberedremark{rem:nonmrf}\textbf{.} \cmt{$\mathcal{G}$ not MRF}
 Suppose that kernel ridge regression is adopted to estimate a
 function $\signalfun$ on a certain graph $\mathcal{G}$, not
 necessarily a Markov random field, using a kernel $\fullkernelmat
 \neq \covmat \define \expectednb[\signalvec\signalvec^T]$. Then it
 can still be interpreted as a method applying \eqref{eq:ridgemarkov}
 on a conditional correlation graph $\mathcal{G}'$ and adopting a
 signal covariance matrix $\fullkernelmat$.

\subsubsection{Further kernels}
\label{sec:otherkernels}
Additional signal reconstructors can be interpreted as kernel-based
regression methods for certain choices of $\fullkernelmat$.
Specifically, it can be seen that~\cite[eq. (27)]{chen2015recovery} is
tantamount to kernel ridge regression with kernel
\begin{align*}
\fullkernelmat =[(\bm I_\vertexnum -\adjacencymat)^T(\bm I_\vertexnum -\adjacencymat)]\inv
\end{align*}
provided that the adjacency matrix $\adjacencymat$ is properly scaled so that
this inverse exists.  Another example is the Tikhonov regularized
estimate in \cite[eq. (15)]{narang2013localized}, which is recovered
as kernel ridge regression upon setting
\begin{align*}
\fullkernelmat =[\bm H^T\bm H + \epsilon \bm I_\vertexnum]\inv
\end{align*}                    %
and letting $\epsilon>0$ tend to 0, where $\bm H$ can be viewed as a
\emph{high-pass filter} matrix. The role of the term $\epsilon \bm
I_\vertexnum$ is to ensure that the matrix within brackets is
invertible.

\subsection{Kernel-based smoothing and graph filtering} 
\label{sec:smoothing}

\cmt{Overview} When an observation $y_\vertexind$ is available per
vertex $v_\vertexind$ for $\vertexind =1,\ldots,\vertexnum$, kernel
methods can still be employed for denoising purposes. Due to the
regularizer in~\eqref{eq:single-general}, the estimate $\signalestvec$
will be a smoothed version of $\bm y$. This section shows how ridge
regression smoothers can be thought of as graph filters, and vice
versa. The importance of this two-way link is in establishing that
kernel smoothers can be implemented in a decentralized fashion as
graph filters~\cite{shuman2013emerging}.

   Upon setting $\samplingmat = \bm I_\vertexnum$ in
  \eqref{eq:ridgeregressionestimate}, one recovers the ridge
  regression smoother $\signalridgesmoothvec = \fullkernelmat
  (\fullkernelmat + \regpar \vertexnum \bm{I}_\vertexnum )^{-1}
  \bm{y}$. If $\fullkernelmat$ is a Laplacian kernel, then
\begin{align}
\label{eq:smoothinglaplacian}
\signalridgesmoothvec = \laplacianevecmat \smoothfun(\bm \Lambda)   \laplacianevecmat^T \bm{y}
\end{align}
where $ \smoothfun(\lambda)\define { \indicator[r(\lambda)\neq 0]/
}{[ 1+\regpar \vertexnum r(\lambda) ]} $.

 To see how \eqref{eq:smoothinglaplacian} relates to a graph
filter, recall that the latter is an operator assigning $\bm y\mapsto
\bm y_F$, where~\cite{shuman2013emerging}
\begin{subequations}
\begin{align}
\label{eq:graphfiltervertexdomain}
  \bm y_F &\define \left(\filtercoef_0\bm I_\vertexnum+
    \sum_{\vertexind=1}^{\vertexnum-1} \filtercoef_\vertexind \bm
    L^\vertexind\right)\bm y 
\\&= \laplacianevecmat
 \left(\filtercoef_0\bm I_\vertexnum+
    \sum_{\vertexind=1}^{\vertexnum-1}\filtercoef_\vertexind \bm \Lambda^\vertexind\right)
\laplacianevecmat^T \bm{y}.
\label{eq:graphfilterfourierdomain}
\end{align}
\end{subequations}
\cmt{distributed implementation} Graph filters can be implemented in a
decentralized fashion since \eqref{eq:graphfiltervertexdomain}
involves successive products of $\bm y$ by $\bm L$ and these products
can be computed at each vertex by just exchanging information with
neighboring vertices.  \cmt{Freq. resp}Expression
\eqref{eq:graphfilterfourierdomain} can be rewritten in the
\emph{Fourier domain} (cf. Sec.~\ref{sec:laplaciankernels}) as $ \tbm
y_F=[\filtercoef_0\bm I_\vertexnum+
\sum_{\vertexind=1}^{\vertexnum-1}\filtercoef_\vertexind \bm
\Lambda^\vertexind] \tbm{y}$ upon defining $ \tbm y_F
\define\laplacianevecmat^T\bm y_F$ and $ \tbm y
\define\laplacianevecmat^T\bm y$. For this reason, the diagonal of
$\filtercoef_0\bm I_\vertexnum+
\sum_{\vertexind=1}^{\vertexnum-1}\filtercoef_\vertexind \bm
\Lambda^\vertexind$ is referred to as the \emph{frequency response} of
the filter.

\cmt{smoother is filter} Comparing \eqref{eq:smoothinglaplacian} with
\eqref{eq:graphfilterfourierdomain} shows that $\signalridgesmoothvec$
can be interpreted as a graph filter with frequency response
${\smoothfun(\bm \Lambda)}$. Thus, implementing
$\signalridgesmoothvec$ in a decentralized fashion using
\eqref{eq:graphfiltervertexdomain} boils down to solving for
$\{\filtercoef_\vertexind\}_{\vertexind = 1}^\vertexnum$ the system of
linear equations $\{\filtercoef_0+
\sum_{\vertexindp=1}^{\vertexnum-1}\filtercoef_\vertexindp
\lambda_\vertexind^\vertexindp =
\smoothfun(\lambda_\vertexind)\}_{\vertexind=1}^\vertexnum$.  \cmt{
  filter is smoother} Conversely, given a filter, a Laplacian kernel
can be found so that filter and smoother coincide.  To this end,
assume without  loss of generality that
$\fourierfiltdiag_\vertexind\leq 1~\forall \vertexind$, where
$\fourierfiltdiag_\vertexind\define \filtercoef_0+
\sum_{\vertexindp=1}^{\vertexnum-1}\filtercoef_\vertexindp
\lambda_\vertexind^\vertexindp $; otherwise, simply scale
$\{\filtercoef_\vertexind\}_{\vertexind=0}^{\vertexnum-1}$. Then,
given $\{\filtercoef_\vertexind\}_{\vertexind=0}^{\vertexnum-1}$, the
sought kernel can be constructed by setting
\begin{align*}
 r(\lambda_\vertexind)=\frac{
1-\fourierfiltdiag_\vertexind
}
{\regpar \vertexnum \fourierfiltdiag_\vertexind}
~\indicator[\fourierfiltdiag_\vertexind\neq
   0].
\end{align*}

%%  \acom{can anything be said about Moura's filters? well, I do not
%%   think so. Filters in Moura's setup are not symmetric matrices, so we
%% cannot implement them using kernel smoothers. } 

\section{Multi-kernel  graph signal reconstruction}
\label{sec:mkl}

\begin{figure}[t]
 \centering \includegraphics[width=.45\textwidth]{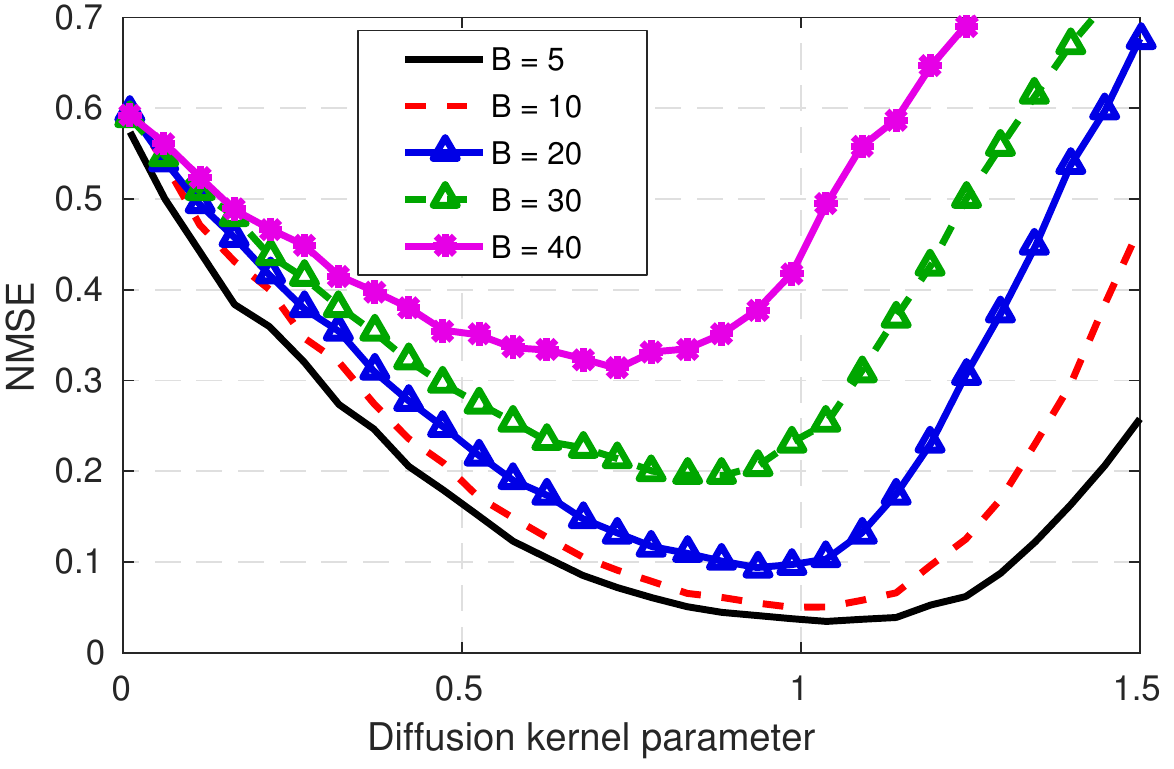}
 \caption{Influence of the diffusion kernel parameter $\sigma^2$ on
   NMSE for $\samplenum = 40$ and several bandwidths $B$
   ($\vertexnum=100$, $\text{SNR} = 20 \text{ dB}$, $\mu = 10^{-4}$).
 }
 \label{fig:nmse_vs_sigma}
\end{figure}

\cmt{Motivation}One of the limitations of kernel methods is their
sensitivity to the choice of the kernel. To appreciate this,
\cmt{kernel choice} Fig.~\ref{fig:nmse_vs_sigma} depicts the
normalized mean-square error (NMSE)
$\expectednb||\signalvec-\signalestvec ||_2^2/\expectednb||\signalvec||_2^2$
when $\mathcal{L}$ is the square loss and $\Omega(\zeta) = |\zeta|$
across the parameter $\sigma^2$ of the adopted diffusion kernel (see
Sec.~\ref{sec:laplaciankernels}). The simulation setting is described
in Sec.~\ref{sec:numericaltests}. At this point though, it suffices to
stress the impact of $\sigma^2$ on the NMSE and the dependence of
the optimum $\sigma^2$  on the bandwidth $B$ of
$\signalfun$. % It is therefore desirable
% to select a suitable kernel using $\{v_s,y_s\}_{s=1}^{S}$. An
% immediate possibility is to find $\sigma$ via
% cross-validation. However, the fact that $\regpar$ is also set by this
% means implies that a two-dimensional search is required. 

\cmt{bandwidth choice} Similarly, the performance of estimators for
bandlimited signals degrades considerably if the estimator assumes a
frequency support $\blset$ that differs from the actual one.  Even for
estimating \emph{low-pass signals}, for which $\blset = \{1,\ldots,
\blnum\}$, parameter $\blnum$ is unknown in practice. Approaches for
setting $\blnum$ were considered
in~\cite{narang2013structured,anis2016proxies}, but they rely solely
on $\samplingset$ and $\bm L$, disregarding the observations $\bm
y$. Note that by adopting the bandlimited kernels
from Sec.~\ref{sec:bandlimited}, bandwidth selection boils down to kernel
selection, so both problems will be treated jointly in the
sequel through the lens of kernel-based learning.

\cmt{kernel methds\ra MKL} This section advocates an MKL approach to
kernel selection in graph signal reconstruction. \cmt{two criteria}Two
algorithms with complementary strengths will be developed. Both select
the most suitable kernels within a user-specified \emph{kernel
  dictionary}.

\subsection{RKHS superposition}
\label{sec:rkhssuperposition}

   \cmt{form of the estimate} Since $\mathcal{H}$ in
  \eqref{eq:single-general} is determined by $\kernelmap$, kernel selection is
  tantamount to RKHS selection. Therefore, a kernel dictionary
  $\{\kernelmap_m\}_{\rkhsind=1}^\rkhsnum$ can be equivalently thought of
  as an RKHS dictionary $\{\mathcal{H}_m\}_{\rkhsind=1}^\rkhsnum$,
  which motivates estimates of the form
\begin{equation}
\label{eq:sumh}
{\rkhsestfun} = \sum_{m=1}^\rkhsnum {\rkhsestfun}_m, \quad {\rkhsestfun}_m \in \mathcal{H}_m.
\end{equation}
\cmt{criterion} Upon adopting a criterion that controls sparsity in
this expansion, the ``best'' RKHSs will be selected. A reasonable
approach is therefore to generalize \eqref{eq:single-general} to
accommodate multiple RKHSs. With $\mathcal{L}$ selected as the square
loss and $\Omega(\zeta)=|\zeta|$, one can pursue an estimate
$\rkhsestfun$ by solving
\begin{equation}
  \hspace{-.25cm}\min_{\{\rkhsfun_\rkhsind\in \mathcal{H}_m\}_{m=1}^M} \frac{1}{S} \sum_{s=1}^S \left[y_s - \sum_{m=1}^M
    \rkhsfun_\rkhsind(v_{\vertexind_s}) \right]^2 + \regpar \sum_{m=1}^M\|\rkhsfun_\rkhsind\|_{\mathcal{H}_m}.
	\label{eq:multi-sq}
\end{equation}

 \cmt{Representer theorem} Invoking \thref{prop:representer}
per $\rkhsfun_\rkhsind$ establishes that the minimizers of
\eqref{eq:multi-sq} can be written as
\begin{align}
\label{eq:representer-mkl-g}
\hat \rkhsfun_\rkhsind(v) = \sum_{s=1}^{S} \alpha^m_s \kernelmap_m(v,v_{\vertexind_s}) ,~~~~m=1,\ldots,M
\end{align}
for some coefficients $\alpha^m_s$. Substituting
\eqref{eq:representer-mkl-g} into \eqref{eq:multi-sq} suggests
obtaining these coefficients as
\begin{equation}
%	\{\hat{\bm{\alpha}}^m\}_{m=1}^M := 
\argmin_{\{\samplealphavec_\rkhsind\}_{m=1}^M} \frac{1}{S} \left\| \bm{y}
- \sum_{m=1}^M\samplekernelmat_m\samplealphavec_\rkhsind \right\|^2 + \regpar \sum_{m=1}^M \left(
\samplealphavec_\rkhsind^T \samplekernelmat_m \samplealphavec_\rkhsind \right)^{1/2}
	\label{eq:multi-vector-sq}
\end{equation}
where $\samplealphavec_\rkhsind:=[\alpha_1^m,\ldots,\alpha_S^m]^T$,
and $\samplekernelmat_\rkhsind=\samplingmat \fullkernelmat_m
\samplingmat^T$ with
$(\fullkernelmat_m)_{\vertexind,\vertexindp}:=\kernelmap_m(v_{\vertexind},v_{\vertexindp})$.
\cmt{promotes sparsity} Letting $\trsamplealphavec_m :=
\samplekernelmat_m^{1/2}\samplealphavec_m$, expression
\eqref{eq:multi-vector-sq} becomes
\begin{align}
&\argmin_{\{\trsamplealphavec_m\}_{m=1}^M} \frac{1}{S} \left\| \bm{y} -
\sum_{m=1}^M\samplekernelmat_m^{1/2}\trsamplealphavec_m \right\|^2 + \regpar \sum_{m=1}^M \| \trsamplealphavec_m
\|_2 	\label{eq:multi-1norm}.
\end{align}
Note that the sum in the regularizer of \eqref{eq:multi-1norm} can be
interpreted as the $\ell_1$-norm of $ [||\trsamplealphavec_1||_2,
\ldots, ||\trsamplealphavec_M||_2]^T$, which is known to promote
sparsity in its entries and therefore in \eqref{eq:sumh}. Indeed,
\eqref{eq:multi-1norm} can be seen as a particular instance of group
Lasso~\cite{bazerque2013basispursuit}.

\begin{algorithm}[t]                
 	\caption{ADMM for multi-kernel regression}
 	\label{algo:admmc1}    
 	\begin{minipage}{20cm}
 		\begin{algorithmic}[1]
	 		\STATE Input: $ \rho, \epsilon>0,
                        \trsamplealphavec\iternot{0}, \bm \nu^0 $ 
 			\REPEAT
 			\STATE  $
                        \trsamplealphavec_\rkhsind\iternot{k+1} =
                        \softthresholding_{\mu
                          S/2\rho}(\auxveco_\rkhsind\iternot{k} +
                        \bm \nu_\rkhsind\iternot{k}) $~~~~~$\rkhsind
                        =1,\ldots,\rkhsnum$ 
 			\STATE  $ \auxveco\iternot{k+1} =(\auxmatf^T\auxmatf +
                        \rho \bm{I})^{-1}[\auxmatf^T\observationvec +
                        \rho(\trsamplealphavec\iternot{k+1} - \bm \nu\iternot{k})] $ 
 			\STATE  $\bm  \nu_\rkhsind\iternot{k+1} = \bm
                        \nu_\rkhsind\iternot{k} + 
                        \auxveco_\rkhsind\iternot{k+1} -
                        \trsamplealphavec_\rkhsind\iternot{k+1},
                        $~~~$\rkhsind =1,\ldots,\rkhsnum$
                        \STATE $k \leftarrow k+1$
 			\UNTIL{ $|| \auxveco\iternot{k+1} -
                          \trsamplealphavec\iternot{k+1}|| \leq \epsilon $ }
 		\end{algorithmic}
 	\end{minipage}
\end{algorithm}

\cmt{ADMM} As shown next, \eqref{eq:multi-1norm} can be efficiently
solved using the alternating-direction method of multipliers
(ADMM)~\cite{bazerque2011splines}. To this end, rewrite
\eqref{eq:multi-1norm} by defining $ \auxmatf \define
[\samplekernelmat_1^{1/2}, \ldots, \samplekernelmat_M^{1/2}] $ and $
\trsamplealphavec \define [\trsamplealphavec_1^T, \ldots,
\trsamplealphavec_M^T]^T $, and introducing the auxiliary variable
$\auxveco\define [\auxveco_1^T,\ldots,\auxveco_\rkhsnum^T]^T$, as
\begin{equation}
\label{eq:admmform}
  \begin{array}{rl}
    \displaystyle\min_{\trsamplealphavec, \auxveco}
    &\;\displaystyle \frac{1}{2} \|\observationvec - \auxmatf\auxveco\|^2 + 
\frac{S\mu}{2}\sum_{\rkhsind=1}^\rkhsnum
 \| \trsamplealphavec_\rkhsind\|_2 \\
    \textrm{s. to} &\; \trsamplealphavec - \auxveco = \bm 0.
  \end{array}
\end{equation}
ADMM iteratively minimizes the \emph{augmented Lagrangian} of
\eqref{eq:admmform} with respect to $\trsamplealphavec$ and $\auxveco$
in a block-coordinate descent fashion, and updates the Lagrange
multipliers associated with the equality constraint using gradient
ascent (see~\cite{giannakis2016decentralized} and references
therein). The resulting iteration is summarized as
Algorithm~\ref{algo:admmc1}, where $ \rho $ is the augmented
Lagrangian parameter, $ \bm \nu\define[\bm \nu_1^T,\ldots,\bm
\nu_\rkhsnum^T]^T $ is the Lagrange multiplier associated with the
equality constraint, and
\begin{equation*}
 \softthresholding_\zeta(\bm a) \define
\frac{\max(0,||\bm a||_2-\zeta)}{||\bm a||_2} \bm a
\end{equation*}
is the so-called \emph{soft-thresholding} operator~\cite{bazerque2011splines}.

\cmt{final estimate} After obtaining
${\{\trsamplealphavec_m\}_{m=1}^M}$ from Algorithm~\ref{algo:admmc1},
the wanted function estimate can be recovered as
\begin{align}
\label{eq:mklrkhsreconstruction}
  \signalestvec = \sum_{m=1}^M\fullkernelmat_m \samplingmat^T
  \samplealphavec_m= \sum_{m=1}^M\fullkernelmat_m \samplingmat^T
  \samplekernelmat_m^{-1/2}\trsamplealphavec_m.
\end{align}

\cmt{balancing} It is recommended to normalize the kernel matrices
in order to prevent imbalances in the kernel selection. Specifically,
one can scale $\{\fullkernelmat_\rkhsind\}_{\rkhsind=1}^\rkhsnum$ such
that $\tr{\fullkernelmat_\rkhsind}=1~\forall\rkhsind$. If
$\fullkernelmat_\rkhsind$ is a Laplacian kernel (see
Sec.~\ref{sec:laplaciankernels}), where $\fullkernelmat_m =
\laplacianevecmat r_m^\dagger(\bm \Lambda) \laplacianevecmat^T$, one
can scale $r_m$ to ensure $\sum_{\vertexind=1}^\vertexnum
r_\rkhsind^\dagger(\lambda_\vertexind) = 1$.

\numberedremark{rem:mklliterature}\textbf{.}  Although criterion
\eqref{eq:multi-sq} is reminiscent of the MKL approach
of~\cite{bazerque2013basispursuit}, the latter differs markedly
because it assumes that the right-hand side of \eqref{eq:sumh} is
uniquely determined given $\signalestfun$, which allows application of
\eqref{eq:single-general} over a direct-sum RKHS
$\mathcal{H}:=\mathcal{H}_1\oplus\cdots\oplus\mathcal{H}_N$ with an
appropriately defined norm. However, this approach cannot be pursued
here since RKHSs of graph signals frequently overlap, implying that
their sum is not a direct one (cf. discussion after~\eqref{eq:norm}).

\subsection{Kernel superposition}

  \begin{algorithm}[!t]
  \caption{Interpolated Iterative Algorithm}
  \label{algo:iia}    
  \begin{minipage}{20cm}
  \begin{algorithmic}[1]
  \STATE Input: $\kernelcoefvec\iternot{0}$, 
  $\{\samplekernelmat_\rkhsind\}_{\rkhsind=1}^\rkhsnum$, $\regpar$ , $\kernelcoefvec_0$, $\radius$,
 $\eta$, $\epsilon$.
  \STATE  $ \samplealphavec\iternot{0} =
  (\samplekernelmat(\kernelcoefvec\iternot{0})+\regpar\samplenum \bm{I})^{-1} \observationvec $
\STATE $k=0$
  \REPEAT
	  \STATE  $ \auxvect\iternot{k} =
          [\samplealphavec\iternotT{k}\samplekernelmat_0\samplealphavec\iternot{k},
          \ldots,
          \samplealphavec\iternotT{k}\samplekernelmat_\rkhsnum\samplealphavec\iternot{k}]^T
          $ 
	  \STATE  $ \kernelcoefvec\iternot{k} = \kernelcoefvec_0 + (\radius
/{\|\auxvect\iternot{k}\|_2})           ~{\auxvect\iternot{k}} $ 
	  \STATE  $ \samplealphavec\iternot{k+1} =
          \eta\samplealphavec\iternot{k} + (1-\eta)
          [\samplekernelmat(\kernelcoefvec\iternot{k}) + \regpar\samplenum \bm{I}]^{-1} \observationvec $ 
          \STATE $k\leftarrow k+1$
  \UNTIL{ $ \| \samplealphavec\iternot{k+1} - \samplealphavec\iternot{k} \| < \epsilon $ }
  \end{algorithmic}
  \end{minipage}
  \end{algorithm}

  \cmt{Overview} The MKL algorithm in Sec.~\ref{sec:rkhssuperposition}
  can identify the best subset of RKHSs and therefore kernels, but
  entails $\rkhsnum\samplenum $ unknowns
  (cf. \eqref{eq:multi-vector-sq}). This section introduces an
  alternative approach entailing only $\rkhsnum+\samplenum$
  variables at the price of not guaranteeing a sparse kernel
  expansion.

  \cmt{idea} The approach is to postulate a kernel of the form
  $ \fullkernelmat(\kernelcoefvec) = \sum_{\rkhsind=1}^{\rkhsnum}
  \kernelcoef_\rkhsind \fullkernelmat_\rkhsind $, where
  $\{\fullkernelmat_\rkhsind\}_{\rkhsind=1}^\rkhsnum$ is given and $
  \kernelcoef_\rkhsind\geq 0~\forall \rkhsind$. The coefficients
  $\kernelcoefvec\define[\kernelcoef_1,\ldots,\kernelcoef_\rkhsnum]^T$
  can be found by jointly minimizing
  \eqref{eq:solutiongraphsmatrixform} with respect to $\kernelcoefvec$
  and $\samplealphavec$~\cite{micchelli2005function}
\begin{equation}
  (\kernelcoefvec, \samplealphaestvec) := \argmin_{\kernelcoefvec, \samplealphavec} \frac{1}{S} 
  \mathcal{L} (\bm  y - \samplekernelmat(\kernelcoefvec) \samplealphavec)
  + \regpar \Omega((\samplealphavec^T \samplekernelmat(\kernelcoefvec) \samplealphavec)^{1/2})
	\label{eq:solutiongraphsmatrixformmkl}
\end{equation}
where $\samplekernelmat(\kernelcoefvec) \define \samplingmat
\fullkernelmat(\kernelcoefvec) \samplingmat^T$. Except for degenerate
cases, problem \eqref{eq:solutiongraphsmatrixformmkl} is not jointly
convex in $\kernelcoefvec$ and $ \samplealphaestvec$, but it is
separately convex in these variables for a convex
$\mathcal{L}$~\cite{micchelli2005function}.
\cmt{contribution}Criterion \eqref{eq:solutiongraphsmatrixformmkl}
generalizes the one in~\cite{argyriou2005combining}, which aims at
combining Laplacian matrices of multiple graphs sharing the same
vertex set.

 \cmt{Cortes IIA} A method termed \emph{interpolated iterative
  algorithm} (IIA) was proposed in \cite{cortes2009regularization} to
solve \eqref{eq:solutiongraphsmatrixformmkl} when $\mathcal{L}$ is the
square loss, $\Omega(\zeta) = \zeta^2$, and $\kernelcoefvec$ is
constrained to lie in a ball $\bm{\Theta}:=\{ \kernelcoefvec:
\kernelcoefvec \geq \bm 0 \textrm{ and }
\|\kernelcoefvec-\kernelcoefvec_0\| \leq \radius \}$ for some
user-defined center $\kernelcoefvec_0$ and radius $\radius>0$. This
constraint ensures that $\kernelcoefvec$ does not diverge. The
first-order optimality conditions for
\eqref{eq:solutiongraphsmatrixformmkl} yield a nonlinear system of
equations, which IIA solves iteratively. This algorithm is displayed
as Algorithm~\ref{algo:iia}, where $\eta>0$ is the step size.

\cmt{Laplacian kernel smoothing} As a special case, it is worth noting
that Algorithm~\ref{algo:iia} enables kernel selection in ridge
smoothing, which is tantamount to optimal filter selection for graph
signal denoising (cf. Sec.~\ref{sec:smoothing}). In this case,
Algorithm~\ref{algo:iia} enjoys a particularly efficient
implementation for Laplacian kernels since their kernel matrices share
eigenvectors. Specifically, recalling that for smoothing
$\samplekernelmat_\rkhsind= \fullkernelmat_\rkhsind =
\laplacianevecmat r_\rkhsind ^\dagger(\bm \Lambda) \laplacianevecmat^T
$ and letting $\fourieralphavec \define
[\fourieralpha_1,\ldots,\fourieralpha_\vertexnum]^T\define
\laplacianevecmat^T\samplealphavec$, suggests that the
$\samplealphavec$-update in Algorithm~\ref{algo:iia} can be replaced
with its scalar version
\begin{align*}
\fourieralpha\iternot{k+1}_\vertexind = \eta\fourieralpha_\vertexind\iternot{k} +
 \frac{(1-\eta)\fourierobservation_\vertexind}{\sum_{\rkhsind=1}^\rkhsnum \kernelcoef_\rkhsind\iternot{k}
  r_\rkhsind^\dagger(\lambda_\vertexind) + \regpar\samplenum
 } ,~\vertexind =1,\ldots,\vertexnum
\end{align*}
whereas the $\auxvect$-update can be replaced with $
\auxentt_\rkhsind\iternot{k} = \sum_{\vertexind=1}^\vertexnum
r_\rkhsind^\dagger(\lambda_\vertexind)(\fourieralpha_\vertexind\iternot{k})^2
$, where $ \auxvect \define [\auxentt_1,\ldots,\auxentt_\rkhsnum]^T$.

 \section{Numerical tests}
 \label{sec:numericaltests}

 This section compares the proposed methods with competing
 alternatives in synthetic- as well as real-data experiments. Monte
 Carlo simulation is used to average performance metrics across
 realizations of the signal $\signalvec$, noise $\noisevec$ (only for
 synthetic-data experiments), and sampling set $\samplingset$. The
 latter is drawn uniformly at random without replacement from
 $\{1,\ldots,\vertexnum\}$.

\subsection{Synthetic bandlimited signals}
\label{sec:syntheticdata}

Three experiments were conducted on an
Erd\H{o}s-R$\grave{\text{e}}$nyi random graph with probability of edge
presence $0.25$~\cite{kolaczyck2009}. Bandlimited signals were
generated as in~\eqref{eq:defbandlimited} with $\blset=\{1,\ldots,B\}$
for a certain $B$. The coefficients
$\{\fourierrkhsfun_\vertexind\}_{\vertexind \in \blset}$ are
independent uniformly distributed over the interval $[0,1]$.  Gaussian
noise was added to yield a target \emph{signal-to-noise} ratio
$\text{SNR} \define ||\signalvec||^2/(\vertexnum \noisevar)$.

The first experiment was presented in Fig.~\ref{fig:nmse_vs_sigma} and
briefly described in Sec.~\ref{sec:mkl} to illustrate the strong
impact of the kernel choice on the $\text{NMSE}\define
\expectednb||\signalvec-\signalestvec
||_2^2/\expectednb||\signalvec||_2^2$.
%\item Erdos-Renyi, probability of edge 0.25,  $\vertexnum=100$
%\item Bandlimited signal, distribution = uniform, 
%\item S = 40, SNR = 20 dB, 
%\item Multi-kernel, RKHS superposition,  $\mu = 10^{-4}$

\begin{figure}[t]
 \centering \includegraphics[width=.45\textwidth]{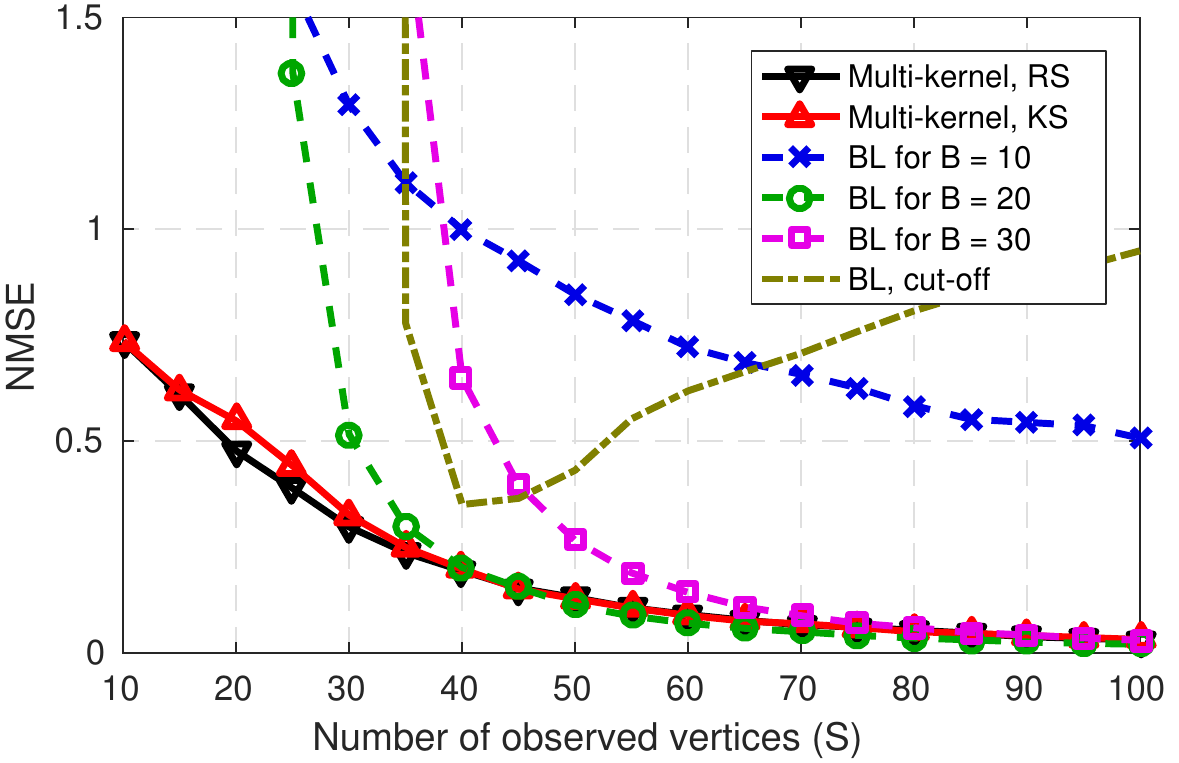}
 \caption{Comparison of different algorithms for estimating
   bandlimited signals. Per Monte Carlo iteration, a bandlimited
   signal with $B=20$ is generated ($\vertexnum=100$,
   $\text{SNR}=10~\text{dB}$).  }
 \label{fig:bandlimited}
\end{figure}

% Fig.~\ref{fig:bandlimited} 
The second experiment compares methods for estimating bandlimited
signals. Fig.~\ref{fig:bandlimited} depicts the NMSE in reconstructing  a
bandlimited signal with $B = 20$ across $\samplenum$. 
% \item  Erdos-Renyi random graph, 
% Edge prob. = 0.25, 100 vertices, 
% \item Bandlimited signal, B =20
% distribution = normal, 
% \item Uniform sampler, SNR = 10 dB,  
The first two curves correspond to the MKL approaches proposed in
Sec.~\ref{sec:mkl}, which employ a dictionary with $5$ bandlimited
kernels, where the $\rkhsind$-th kernel has $\beta = 10^4$ and
bandwidth $5\rkhsind +5$, $\rkhsind = 1,\ldots,5$. The regularization
parameter $\mu$ was set to $10^{-1}$ for RKHS superposition (RS), and
to $5\cdot 10^{-3}$ for kernel superposition (KS).  The next three
curves correspond to the LS estimator for bandlimited (BL) signals
in~\eqref{eq:blestimateb}~\cite{narang2013localized,gadde2015probabilistic,tsitsvero2016uncertainty,chen2015theory,anis2016proxies,narang2013structured}. In
order to illustrate the effects of the uncertainty in $B$, each curve
corresponds to a different value of $B$ used for \emph{estimation}
(all estimators observe the same synthetic signal of bandwidth
$B=20$). The last curve pertains to the estimator
in~\cite{narang2013structured,anis2016proxies}, which is the LS
estimator in~\eqref{eq:blestimateb} with parameter $B$ set to the
\emph{cut-off frequency} obtained from $\bm L$ and $\samplingset$ by
means of a \emph{proxy} of order 5.

Observe in Fig.~\ref{fig:bandlimited} that although the proposed MKL
estimators do not know the bandwidth, their performance is no worse
than that of the BL estimator with perfect knowledge of the signal
bandwidth. Remarkably, the MKL reconstruction schemes offer a
reasonable performance for  $\samplenum$ small, thus overcoming the need
of the LS estimator for $\samplenum\geq B$ samples.

%\end{itemize}

%%%%%%%%%%%%%%%%%%%%%%%%%%%%%%%%%%%%%%%%%%%%%%%%%%%%%%%%%%%%%%%%%%%%%

% \item data
% \begin{itemize}
% \item Erdos-Renyi random graph, 
% Edge prob. = 0.25, 250 vertices, 
% \item Bandlimited signal, B = 20, 
% distribution = uniform, Uniform sampler, 
% S = 80, SNR = 20 dB, Multi-kernel, 
% RKHS superposition, 
% \end{itemize}
\cmt{bandwidth estimation} The third experiment illustrates how the
bandwidth of a graph signal can be estimated using the MKL scheme from
Sec.~\ref{sec:rkhssuperposition}. To this end, a dictionary of 17
bandlimited kernels was constructed with $\beta =10^3$ and uniformly
spaced bandwidth between 10 and 90, i.e., $\fullkernelmat_\rkhsind$ is
of bandwidth $B_\rkhsind \define 5\rkhsind+5$, $\rkhsind
=1,\ldots,17$.  Fig.~\ref{fig:path} depicts the \emph{sparsity path}
for a typical realization of a bandlimited signal with bandwidth
$B=20$.  Each curve is obtained by executing
Algorithm~\ref{algo:admmc1} for different values of $\mu$ and
represents the squared modulus of the vectors
$\{\samplealphavec_\rkhsind\}_{\rkhsind=1}^\rkhsnum$
in~\eqref{eq:mklrkhsreconstruction} for a different $\rkhsind$.  As
expected, the sparsity effected in the expansion \eqref{eq:sumh}
increases with $\mu$, forcing Algorithm~\ref{algo:admmc1} to
eventually rely on a single kernel. That kernel is expected to be the
one leading to best data fit. Since the observed signal is
bandlimited, such a kernel is in turn expected to be the one in the
dictionary whose bandwidth is closest to~$B$.

Constructing a rule that determines, without human intervention, which
is the last curve $||\samplealphavec_\rkhsind||^2$ to vanish is not
straightforward since it involves comparing
$\{||\samplealphavec_\rkhsind||^2\}_{\rkhsind=1}^\rkhsnum$ for a
properly selected $\mu$.  Thus, algorithms pursuing such objective
fall out of the scope of this paper. However, one can consider the
\emph{naive} approach that focuses on a prespecified value of $\mu$ and
estimates the bandwidth as $ \hat B = B_{m^*}$, where
$m^{*}=\argmax_{\rkhsind\in\{1,\ldots,\rkhsnum\}}||\samplealphavec_\rkhsind||^2$.
Table~\ref{tab:freqestimator} reports the performance of such
estimator in terms of bias $\expectednb|B-\hat B|$ and standard
deviation $\sqrt{\expectednb|B-\expectednb\hat B|}$ for different
values of $B$ for a synthetically generated bandlimited signal.

%\acom{clarify what   the expectation averages?}

\begin{figure}[t]
 \centering \includegraphics[width=.48\textwidth]{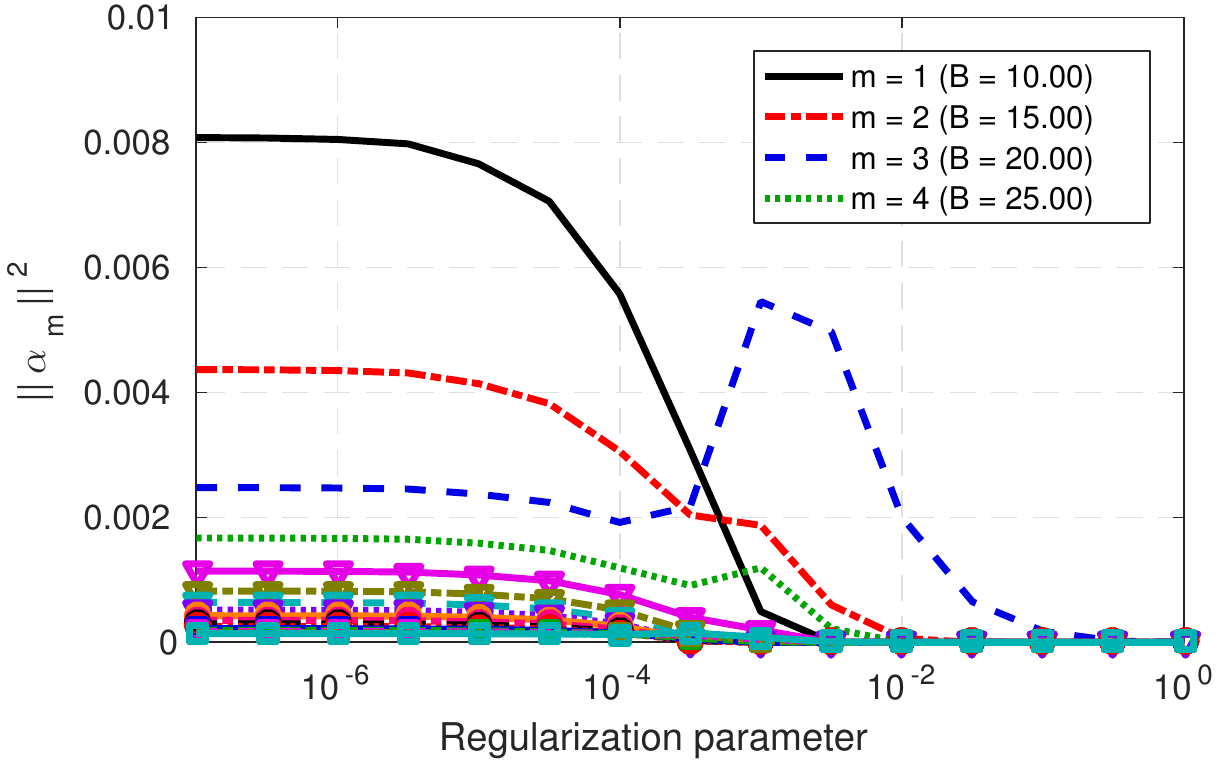}
 \caption{Sparsity path of the estimate for a typical realization.
   The legend only displays the first four curves. The last curve to
   vanish indicates the bandwidth of the observed signal
   ($\samplenum = 80$, $\vertexnum = 250$, $\text{SNR} = 20 \text{
     dB}$).}
 \label{fig:path}
\end{figure}

\begin{table}[t]
\begin{tabular}{ccccccc}
	\hline
		 & B = 10 & B = 20 & B = 30 & B = 40 & B = 50 & B = 60\\
	BIAS	 & 0.0 & 0.6 & 0.5 & 0.4 & 0.4 & 3.6\\
	STD	 & 0.0 & 1.9 & 2.9 & 1.4 & 1.4 & 10.5\\
	\hline
\end{tabular}
\caption{Bias and standard deviation for the \emph{naive} bandwidth
  estimator with $\mu=10^{-2}$ ($\samplenum = 80$, $\vertexnum = 250$,
  $\text{SNR} = 20 \text{ dB}$). 
}
\label{tab:freqestimator}
\end{table}

\begin{table*}[t!]
\center
\begin{tabular}{cccccccc}
	\hline
&	KRR with cov. kernel & Multi-kernel, RS & Multi-kernel, KS & BL for $B=2$ & BL for $B=3$ & BL, cut-off\\
	\hline
        NMSE	 & 0.34 & 0.44 & 0.43  & 1.55 & 32.64 & 3.97\\
	RMSE [min]	 & 3.95 & 4.51 & 4.45 & 8.45 & 38.72 & 13.50\\	
	\hline
\end{tabular}
\caption{Generalization NMSE and root mean square error for the
  experiment with the 
  airport data set~\cite{AirportsDataset}. }
\label{tab:airports}
\end{table*}

\subsection{Real data}

This section assesses the performance of the proposed methods with two
real-data sets. In both experiments, the data set is split into a
training set used to learn the edge weights, and a test set from which
the observations $\bm y$ are drawn for performance
evaluation. Different from the synthetic-data experiments in
Sec.~\ref{sec:syntheticdata}, where the generated noiseless function
$\signalfun$ is available and therefore the reconstruction NMSE can be
measured on observed and unobserved vertices, the experiments in this
section measure \emph{generalization} NMSE solely at unobserved
vertices.

The first data set comprises 24 signals corresponding to the average
temperature per month in the intervals 1961-1990 and 1981-2010
measured by 89 stations in
Switzerland~\cite{SwissTemperatureDataset}. The training set contains
the first 12 signals, which correspond to the interval 1961-1990,
whereas the test set contains the remaining 12.  Each station is
identified with a vertex and the graph is constructed by applying the
algorithm in~\cite{dong2015learning} with parameters $\alpha =1$ and
$\beta = 30$ to the training signals. Based on samples of a test
signal on $\samplenum$ vertices, the goal is to estimate the signal at
the remaining $\vertexnum-\samplenum$ vertices. NMSE is averaged
across the 12 test signals for a randomly chosen set
$\samplingset$. Fig.~\ref{fig:swiss} compares the performance of
the MKL schemes from Sec.~\ref{sec:mkl} along with
single-kernel ridge regression (KRR) and estimators for bandlimited
signals. The MKL algorithms employ a dictionary comprising 10
diffusion kernels with parameter $\sigma^2$ uniformly spaced between 1
and 20. Single-kernel ridge regression uses diffusion kernels for
different values of $\sigma^2$. Fig.~\ref{fig:swiss} showcases
the performance improvement arising from adopting the proposed
multi-kernel approaches.

The second data set contains departure and arrival information for
flights among U.S. airports~\cite{AirportsDataset}, from which the
$3\cdot 10^6$ flights in the months of July, August, and September of
2014 and 2015 were selected.  A graph was constructed with vertices
representing the $\vertexnum = 50$ airports with highest traffic. An
edge was placed between a pair of vertices if the number of flights
between the associated airports exceeds 100 within the observation
window. A signal was constructed per day averaging the arrival delay
of \emph{all} inbound flights per selected airport. Thus, a total of
184 signals were considered, the first 154 were used for training
(July, August, September 2014, and July, August 2015), and the
remaining 30 for testing (September 2015).

Since it is reasonable to assume that the aforementioned graph
approximately satisfies the Markovian property
(cf.~Sec.~\ref{sec:covkernels}), a Markov random field was fit to the
observations. To this end, the signals were assumed Gaussian so as to
estimate the covariance matrix of the observations via maximum
likelihood with constraints imposing  the
$(\vertexind,\vertexindp)$-th entry of the inverse covariance matrix
to be zero if $(v_\vertexind,v_\vertexindp)\notin\edgeset$. Specifically,
$\bm S\define\covmat\inv$ was found by solving the following convex
program:
\begin{align}
\begin{aligned}
\label{eq:sestimation}
  \min_{\bm S\in \rfield^{\vertexnum\times \vertexnum}}~~~&  \tr{\bm S
\covmatest\inv}-\log\det(\bm S) \\
  \text{s.to}~~~~~~& \bm S\succeq \bm 0,~ (\bm
  S)_{\vertexind,\vertexindp}=0~~~\forall(v_\vertexind,v_\vertexindp)\notin
  \edgeset
\end{aligned}
\end{align}
where $\covmatest$ is the sample covariance matrix of the training
signals after normalization to effect zero mean and unit variance per
entry of $\signalvec$. The  inverse of $\bm S$
was used as a covariance kernel (see
Sec.~\ref{sec:covkernels}). Note that such a kernel will only be
nearly optimal since the \emph{true} data covariance is unknown.

Employing Laplacian kernels or applying estimators for bandlimited
signals requires a Laplacian matrix. Although the edge set $\edgeset$
has already been constructed, it is necessary to endow those edges
with weights. Since our efforts to obtain a reasonable estimation
performance over the graphs provided by the method
in~\cite{dong2015learning} turned out unsuccessful, a novel approach
was developed. Specifically, the Laplacian matrix is sought as the
minimizer of $||\bm L - \bm S||_F^2$, where $\bm S$ is the solution to
\eqref{eq:sestimation} and $\bm L$ is a valid Laplacian with a zero at
the $(\vertexind,\vertexindp)$-th position if
$(v_\vertexind,v_\vertexindp)\notin \edgeset$.  Due to space
limitations, the rationale and details behind this approach are
skipped. 

Table~\ref{tab:airports} lists the NMSE and root mean-square error in
minutes for the task of predicting the arrival delay at 40 airports
when the delay at a randomly selected collection of 10 airports is
observed.  The second column corresponds to the ridge regression
estimator that uses the nearly-optimal \emph{estimated} covariance
kernel. The next two columns correspond to the multi-kernel approaches
in Sec.~\ref{sec:mkl} with a dictionary of 30 diffusion kernels with
values of $\sigma^2$ uniformly spaced between 0.1 and 7. The rest of
columns pertain to estimators for bandlimited signals.
Table~\ref{tab:airports} demonstrates the good performance of
covariance kernels as well as the proposed multi-kernel approaches
relative to competing alternatives.

\begin{figure}[t]
 \centering \includegraphics[width=.45\textwidth]{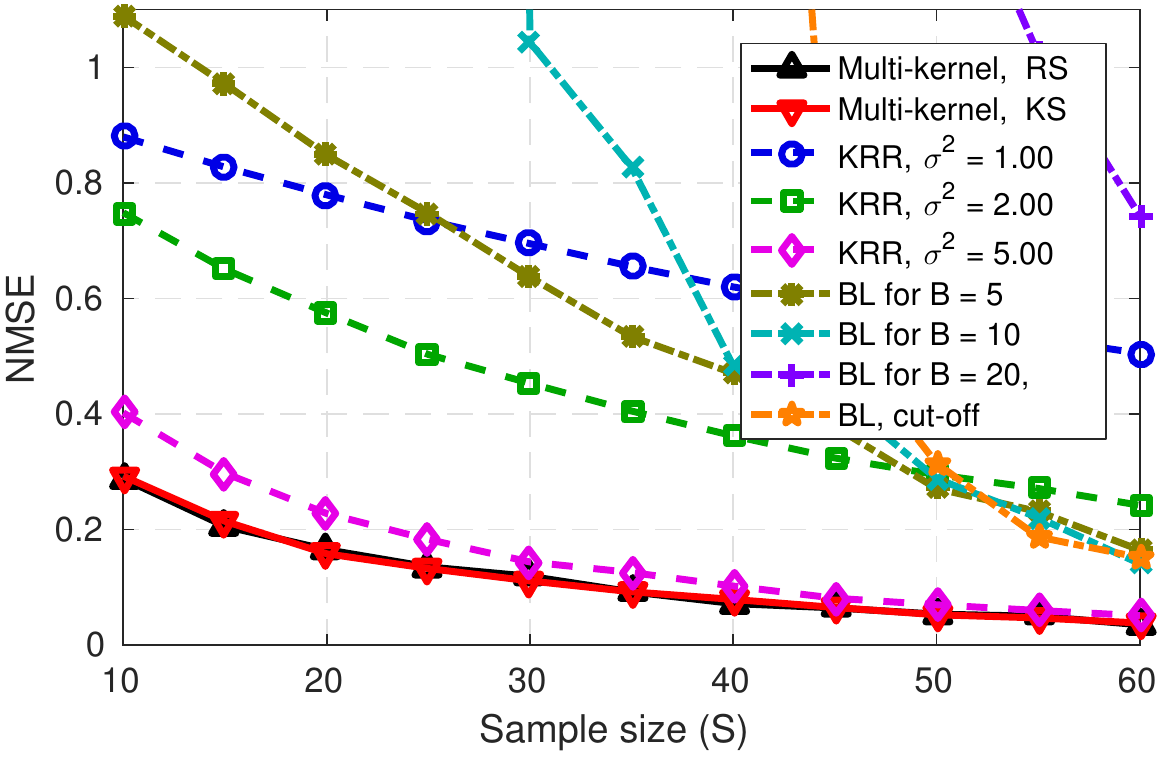}
 \caption{Generalization NMSE for the data set
   in~\cite{SwissTemperatureDataset}.}
 \label{fig:swiss}
\end{figure}

% \begin{itemize}
% \item Curves \ra NMSE measured on the unobserved vertices
% \begin{itemize}
% \item MKL uses 10 diffusion kernels with $\sigma^2$ uniformly spaced
%   between 1 and 20. 
% \item kernel ridge regression
% \end{itemize}
% \end{itemize}

\section{Conclusions}
 \label{sec:conclusions}

 This paper introduced kernel-based learning as a unifying framework
 subsuming a number of existing signal estimators. SPoG notions such
 as bandlimitedness, graph filters, and the graph Fourier transform
 were accommodated under this perspective. The notion of bandlimited
 kernels was invoked to establish that LS estimators are limiting
 versions of the ridge regression estimator with Laplacian
 kernels. Optimality of covariance kernels was also revealed and a
 novel interpretation of kernel regression on graphs was presented in
 terms of Markov random fields. Graph filters are tantamount to
 kernel-based smoothers, which suggested applying the former to
 implement the latter in a decentralized fashion. Finally, numerical
 experiments corroborated the validity of the theoretical findings.

 Future research will pursue algorithms for learning graph Laplacian
 matrices tailored for regression, broadening regression to
 directed graphs, and numerical experiments with further data sets.

%% This paper introduced nonparametric estimators of signal functions
%% evolving over graphs that can be efficiently implemented using group
%% Lasso. Numerical tests demonstrate performance improvements over
%% competing alternatives.

%% Future research will consider alternative
%% kernel selection criteria, regression on directed graphs, and
%% extensive tests over real signals on graphs.

%% Investigate parameter selection criteria inspired by \thref{prop:covkernel}.

\appendices

\section{Proof of the representer theorem}
\label{appendix:representer}

\thref{prop:representer} can be proved upon decomposing $\rkhsvec$
according to the following result.
\begin{lemma}
\thlabel{lemma:rkhsdecomposition}
If $\samplingmat$ is as in Sec.~\ref{sec:kernellearning} and 
$\rkhsfun$ belongs to $\rkhs$, then $\rkhsvec \define
[\rkhsfun(v_1),\ldots,\rkhsfun(v_N)]^T$ can be expressed as
\begin{align}
\label{eq:rkhsdecomposition}
\rkhsvec = \fullkernelmat \samplingmat^T \samplealphavec + \fullkernelmat \perpvec
\end{align}
for some $\samplealphavec \in \rfield^S$ and $\perpvec\in \rfield^N$
satisfying $\samplingmat \fullkernelmat \perpvec = \bm 0$.
\end{lemma}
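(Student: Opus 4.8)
The plan is to turn the sought decomposition \eqref{eq:rkhsdecomposition} into the solvability of one linear system, and to settle that solvability using the positive semidefiniteness of $\fullkernelmat$. First, since $\rkhsfun\in\rkhs$, \eqref{eq:generalform} provides a vector $\fullalphavec\in\rfield^\vertexnum$ with $\rkhsvec=\fullkernelmat\fullalphavec$. It therefore suffices to split the coefficient vector as $\fullalphavec=\samplingmat^T\samplealphavec+\perpvec$ with $\samplingmat\fullkernelmat\perpvec=\bm 0$, since left-multiplying this splitting by $\fullkernelmat$ yields exactly $\rkhsvec=\fullkernelmat\samplingmat^T\samplealphavec+\fullkernelmat\perpvec$, i.e. \eqref{eq:rkhsdecomposition}. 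Note that uniqueness is not required, so I only need to exhibit one valid $\samplealphavec$ and then set $\perpvec:=\fullalphavec-\samplingmat^T\samplealphavec$.

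Second, I would impose the constraint on $\perpvec$. Substituting $\perpvec=\fullalphavec-\samplingmat^T\samplealphavec$ into $\samplingmat\fullkernelmat\perpvec=\bm 0$ gives
\begin{align*}
\samplingmat\fullkernelmat\samplingmat^T\samplealphavec=\samplingmat\fullkernelmat\fullalphavec,
\end{align*}
that is, $\samplekernelmat\samplealphavec=\samplingmat\fullkernelmat\fullalphavec$ with $\samplekernelmat\define\samplingmat\fullkernelmat\samplingmat^T$ the sampled kernel matrix. Thus the lemma reduces to showing that this $\samplenum\times\samplenum$ system is consistent for every right-hand side of the form $\samplingmat\fullkernelmat\fullalphavec$; a convenient solution is then $\samplealphavec=\samplekernelmat^\dagger\samplingmat\fullkernelmat\fullalphavec$.

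The crux, and the step I expect to require the most care, is establishing this consistency, i.e. that $\samplingmat\fullkernelmat\fullalphavec\in\columnspan\{\samplekernelmat\}$. Here I would exploit that $\fullkernelmat$ is symmetric positive semidefinite and factor it as $\fullkernelmat=\bm B\bm B^T$ (e.g.\ via its symmetric square root). Then $\samplekernelmat=\samplingmat\bm B\bm B^T\samplingmat^T=(\samplingmat\bm B)(\samplingmat\bm B)^T$, so by the elementary identity $\columnspan\{\bm M\bm M^T\}=\columnspan\{\bm M\}$ one gets $\columnspan\{\samplekernelmat\}=\columnspan\{\samplingmat\bm B\}$. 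On the other hand, $\samplingmat\fullkernelmat\fullalphavec=(\samplingmat\bm B)(\bm B^T\fullalphavec)\in\columnspan\{\samplingmat\bm B\}$, which places the right-hand side in $\columnspan\{\samplekernelmat\}$ as needed. It is worth emphasizing that this inclusion is precisely where positive semidefiniteness enters: for a general symmetric $\fullkernelmat$ the factorization $\bm B\bm B^T$ fails and the system may be inconsistent. With consistency in hand, the splitting of $\fullalphavec$ exists and \eqref{eq:rkhsdecomposition} follows, completing the lemma and hence providing the linear-algebra route to the representer theorem.
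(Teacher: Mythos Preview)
Your proposal is correct and follows essentially the same route as the paper: both start from $\rkhsvec=\fullkernelmat\fullalphavec$, set $\perpvec=\fullalphavec-\samplingmat^T\samplealphavec$, reduce the constraint $\samplingmat\fullkernelmat\perpvec=\bm 0$ to the linear system $\samplekernelmat\samplealphavec=\samplingmat\fullkernelmat\fullalphavec$, and establish its solvability via a PSD factorization of $\fullkernelmat$. The only cosmetic difference is that the paper uses the eigendecomposition $\fullkernelmat=\fullkernelevecmat\fullkernelevalmat\fullkernelevecmat^T$ to prove the full range equality $\columnspan\{\samplingmat\fullkernelmat\}=\columnspan\{\samplingmat\fullkernelmat\samplingmat^T\}$, whereas you use a generic square-root factorization $\fullkernelmat=\bm B\bm B^T$ and only the inclusion actually needed; the underlying argument is the same.
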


\begin{proof}
  Since $\rkhsfun\in \rkhs$, there exists $\fullalphavec$ such that
  $\rkhsvec = \fullkernelmat \fullalphavec$.  Thus, one needs to show
  that, for a given $\fullalphavec$, it is possible to choose
  $\samplealphavec$ and $\perpvec$ satisfying $\fullkernelmat
  \fullalphavec = \fullkernelmat \samplingmat^T \samplealphavec +
  \fullkernelmat \perpvec$ and $\samplingmat \fullkernelmat \perpvec =
  \bm 0$. This is possible, for instance, if one fixes $\perpvec =
  \fullalphavec -\samplingmat^T \samplealphavec$, and shows that there
  exists an $\samplealphavec$ such that $\samplingmat \fullkernelmat
  \perpvec = \samplingmat \fullkernelmat( \fullalphavec
  -\samplingmat^T \samplealphavec)= \bm 0$. This, in turn, follows if
  one establishes that $ \samplingmat \fullkernelmat \fullalphavec =
  \samplingmat \fullkernelmat \samplingmat^T \samplealphavec$ always
  admits a solution in $\samplealphavec$, which holds since
  $\columnspan \{\samplingmat \fullkernelmat \} = \columnspan
  \{\samplingmat \fullkernelmat \samplingmat^T \}$. To see this,
  consider the eigendecomposition $\fullkernelmat = \fullkernelevecmat
  \fullkernelevalmat \fullkernelevecmat^T$ and note that
\begin{align*}
\columnspan \{\samplingmat \fullkernelmat \samplingmat^T
\}&
=  \columnspan\{\samplingmat \fullkernelevecmat \fullkernelevalmat \fullkernelevecmat^T\samplingmat^T \}
=  \columnspan\{\samplingmat \fullkernelevecmat \fullkernelevalmat^{1/2} \}
\\&=
%   \columnspan\{\samplingmat \fullkernelevecmat \fullkernelevalmat \} =
\columnspan\{\samplingmat \fullkernelevecmat \fullkernelevalmat
\fullkernelevecmat^T\}
=
\columnspan\{\samplingmat \fullkernelmat
\} 
\end{align*}
which concludes the proof.
\end{proof}

\thref{lemma:rkhsdecomposition} essentially states that for arbitrary
$\mathcal{S}$, any $\rkhsfun\in \rkhs$ can be decomposed into two
components as $\rkhsfun = \rkhsfun_\mathcal{S} + \rkhsfun_\perp$. The
first can be expanded in terms of the vertices indexed by
$\mathcal{S}$ as $\rkhsfun_\mathcal{S}(v) = \sum_{s=1}^S
\samplealpha_s \kernelmap(v,v_{\vertexind_s})$, whereas the second
vanishes in the sampling set, i.e.,
$\rkhsfun_\perp(v_\sampleind)=0~\forall \sampleind\in \mathcal{S}$.
Conversely, it is clear that any function that can be written as in
\eqref{eq:rkhsdecomposition} for arbitrary $\samplealphavec$ and
$\perpvec$ belonging to $\rkhs$. Hence,~\thref{lemma:rkhsdecomposition}
offers an alternative parameterization of $\rkhs$ in terms of
$\samplealphavec$ and $\perpvec$. Thus, the minimizer
$\fullalphaestvec$ of \eqref{eq:solutiongraphs} can be obtained as
$\fullalphaestvec = \samplingmat^T \samplealphaestvec + \perpestvec$,
where
\begin{align}
(\samplealphaestvec,\perpestvec) := &\argmin_{\samplealphavec,\perpvec} \frac{1}{S} 
        \mathcal{L} (\bm  y - \samplingmat \fullkernelmat (\samplingmat^T \samplealphavec +
\perpvec)) \nonumber
\\&\hspace{1.4cm}+ \regpar \Omega\left([(\samplingmat^T \samplealphavec +
\perpvec)^T \fullkernelmat (\samplingmat^T \samplealphavec +
\perpvec)]^{1/2}\right) \nonumber
\\&\text{s. to}~~\samplingmat \fullkernelmat \perpvec = \bm 0.
	\label{eq:solutiongraphsbeta}
\end{align}
Since $ \mathcal{L} (\bm y - \samplingmat \fullkernelmat
(\samplingmat^T \samplealphavec + \perpvec)) = \mathcal{L} (\bm y -
\samplingmat \fullkernelmat \samplingmat^T \samplealphavec) $, the
first term in the objective does not depend on $\perpvec$. On the
other hand, since $\Omega$ is increasing and
\begin{align*}
(\samplingmat^T \samplealphavec +
\perpvec)^T \fullkernelmat (\samplingmat^T \samplealphavec +
\perpvec) = 
\samplealphavec^T \samplingmat \fullkernelmat \samplingmat^T \samplealphavec 
+
\perpvec^T \fullkernelmat \perpvec
\end{align*}
it follows that the objective of \eqref{eq:solutiongraphsbeta} is
minimized for $\perpvec =\bm 0$, which shows that $\signalestfun$ in
\eqref{eq:single-general} can be written as $\signalestvec =
\fullkernelmat \fullalphaestvec = \fullkernelmat \samplingmat^T
\samplealphaestvec$, thus completing the proof.

\section{Big data scenarios}
\label{sec:bigdata}
Evaluating the $\vertexnum \times \vertexnum$ Laplacian kernel matrix
in~\eqref{eq:laplacian_kernel}) incurs complexity
$\mathcal{O}(\vertexnum^3)$, which does not scale well with
$\vertexnum$. This appendix explores two means of reducing this
complexity.  Both rely on solving \eqref{eq:solutiongraphsf} rather
than \eqref{eq:solutiongraphsmatrixform} since the former employs
$\fullkernelmat^\dagger = \laplacianevecmat r(\bm \Lambda)
\laplacianevecmat^T$, whereas the latter needs $\fullkernelmat$.

Recall from Sec.~\ref{sec:laplaciankernels} that Laplacian kernels
control the smoothness of an estimate by regularizing its Fourier
coefficients $|\fourierrkhsfun_\vertexind|$ via $r$. Computational
savings can be effected if one is willing to finely tune the
regularization only for large $\vertexind$, while allowing a coarse
control for small $\vertexind$. Specifically, the key idea here is to
adopt a function of the form
\begin{align}
\label{eq:rfunbd}
r(\lambda_\vertexind) = \begin{cases}
d\lambda_\vertexind & \text{if }1<\vertexind\leq B\\
d_\vertexind & \text{if }\vertexind>B \text{ or } \vertexind=1
\end{cases}
\end{align}
where $d$ and $d_\vertexind$ are constants freely selected over the
ranges $d,d_1>0$ and
$d_\vertexind>-\lambda_\vertexind$ for $\vertexind>B$. Note that
\eqref{eq:rfunbd} can be employed, in particular, to promote
bandlimited estimates of bandwidth $B$ by setting
$\{d_\vertexind\}_{\vertexind = B+1}^\vertexnum$ sufficiently large.
Defining $\oblaplacianevecmat$ as the matrix whose columns are the
$\vertexnum-B$ principal eigenvectors of $\bm L$, one obtains
\begin{align}
\fullkernelmat\inv = d\bm L + \oblaplacianevecmat
(
\bm \Delta - d\oblaplacianevalmat
)
\oblaplacianevecmat^T + d_1\bm 1 \bm 1^T + \epsilon \bm I_N
\end{align}
where $\bm \Delta \define \diag{d_{B+1},\ldots,d_\vertexnum}$ and $
\epsilon \bm I_N$ with $\epsilon>0$ is added to ensure that
$\fullkernelmat$ is invertible in case that the multiplicity of the
zero eigenvalue of $\bm L$ is greater than one, which occurs when the
graph has multiple connected components.

 Alternative functions that do not require eigenvector
computation are low-order polynomials of the form
\begin{align}
%\label{eq:rfunbd}
r(\lambda) = \sum_{p=0}^P a_p \lambda^p.
\end{align}
In this case, the resulting $\fullkernelmat\inv$  reads as
\begin{align*}
\fullkernelmat\inv = a_0 \bm I_N + \sum_{p=1}^P a_p \bm L^p.
\end{align*}
The cost of obtaining this matrix is reduced since  powers of $\bm
L$ can be efficiently computed when $\bm L$ is sparse, as is typically
the case. In the extreme case where $P=1$, $a_1>0$, and
$a_0\rightarrow 0$, the regularizer becomes $\rkhsvec^T \bm L
\rkhsvec$, which corresponds to the Laplacian regularization
(cf. Sec.~\ref{sec:laplaciankernels}).

\section{Proof of \thref{th:blandkernel}}
\label{sec:proof:th:blandkernel}

 Without loss of generality, let $\blset = \{1,\ldots,B\}$;
otherwise, simply permute the order of the eigenvalues. Define also
the $\vertexnum \times B$ matrix $\blselectionmat = [\bm I_B, \bm
  0]^T$ and the $\vertexnum \times (\vertexnum-B)$ matrix
$\compblselectionmat = [ \bm 0,\bm I_{\vertexnum-B}]^T$, whose
concatenation clearly gives $[\blselectionmat, \compblselectionmat] =
\bm I_{N}$. 
 Since in this case $\bllaplacianevecmat =
\laplacianevecmat \blselectionmat$, \eqref{eq:blestimateb} becomes
\begin{align}
\label{eq:blandkernelbl}
  \signallsestvec
&=
\laplacianevecmat \blselectionmat
[ \blselectionmat^T\laplacianevecmat^T\samplingmat^T\samplingmat\laplacianevecmat \blselectionmat]\inv
 \blselectionmat^T\laplacianevecmat^T\samplingmat^T\observationvec .
\end{align}

 On the other hand, the ridge regression version of
\eqref{eq:solutiongraphsf} is 
\begin{align}
\label{eq:alternativeridge}
\signalestvec &:= \argmin_{\rkhsvec} \frac{1}{S} 
        ||\observationvec - \samplingmat\rkhsvec||^2
+ \regpar \rkhsvec^T \fullkernelmat\inv \rkhsvec
\end{align}
where the constraint has been omitted since
$r_\beta(\lambda_\vertexind)>0~\forall \vertexind$.  The minimizer of
\eqref{eq:alternativeridge} is 
\begin{subequations}
\begin{align}
\signalestvec &=
 (\samplingmat^T\samplingmat  +
\mu S \fullkernelmat\inv)\inv \samplingmat^T\observationvec 
\\
&= \laplacianevecmat (\laplacianevecmat^T\samplingmat^T\samplingmat \laplacianevecmat +
\mu S r_\beta(\laplacianevalmat))\inv
\laplacianevecmat^T\samplingmat^T\observationvec.
\label{eq:blandkernelkernel}
\end{align}
\end{subequations}

Establishing that $\signalestvec \rightarrow \signallsestvec $ therefore
amounts to showing that the right-hand side of
\eqref{eq:blandkernelbl} converges to that  of
\eqref{eq:blandkernelkernel}.  For this, it  suffices to prove that
\begin{align}
\label{eq:blandkernelsufficient}
&(\auxmat +
\mu S r_\beta(\laplacianevalmat))\inv
\rightarrow
 \blselectionmat
[ \blselectionmat^T\auxmat \blselectionmat]\inv
 \blselectionmat^T
\end{align}
where $\auxmat \define \laplacianevecmat^T\samplingmat^T\samplingmat
\laplacianevecmat$.  Note that $\blselectionmat^T\auxmat
\blselectionmat =
\bllaplacianevecmat^T\samplingmat^T\samplingmat\bllaplacianevecmat $
is invertible by hypothesis.  With $\iblaplacianevalmat\define
(1/\beta)\bm  I_B$ and $\oblaplacianevalmat\define \beta\bm I_{\vertexnum-B}$
representing the \emph{in-band} and \emph{out-of-band} parts of
$r_\beta(\laplacianevalmat)$, the latter can be written as
$r_\beta(\laplacianevalmat) =
\diag{\iblaplacianevalmat,\oblaplacianevalmat}$. With this notation,
 \eqref{eq:blandkernelsufficient} becomes
\begin{align}
\label{eq:blandkernelsufficientmat}
&\nonumber
\left(
\left[
\begin{array}{c c}
\blselectionmat^T \auxmat \blselectionmat & \blselectionmat^T \auxmat
\compblselectionmat\\
\compblselectionmat^T \auxmat \blselectionmat & \compblselectionmat^T \auxmat
\compblselectionmat
\end{array}
\right]
+
\mu S\left[
\begin{array}{c c}
\iblaplacianevalmat &\bm 0\\
\bm 0 &\oblaplacianevalmat \\
\end{array}
\right]
\right)\inv\\
&\rightarrow
\left[
\begin{array}{c c}
( \blselectionmat^T\auxmat \blselectionmat)\inv &\bm 0\\
\bm 0 & \bm 0
\end{array}
\right].
\end{align}
 Using block matrix inversion formulae,  it readily follows that the
left-hand side equals the following matrix product
\begin{align}
\nonumber
&\left[
\begin{array}{c c}
\bm I_B &
-(\blselectionmat^T \auxmat \blselectionmat + \mu S \iblaplacianevalmat)\inv \blselectionmat^T \auxmat
\compblselectionmat
\\
\bm 0 & \bm I_{\vertexnum-B}
\end{array}
\right]
\\&
\left[
\begin{array}{c c}
(\blselectionmat^T \auxmat \blselectionmat + \mu S \iblaplacianevalmat)\inv
 &\bm 0\\
\bm 0 & \auxmatt\inv
\end{array}
\right]
\label{eq:blandkernelleftterm}
\\&
\left[
\begin{array}{c c}
\bm I_B &
\bm 0\\
-
\compblselectionmat^T \auxmat\blselectionmat
(\blselectionmat^T \auxmat \blselectionmat + \mu S \iblaplacianevalmat)\inv 
 & \bm I_{\vertexnum-B}
\end{array}
\right]
\nonumber
\end{align}
where
\begin{align*}
\auxmatt&\define
\compblselectionmat^T
 \auxmat
\compblselectionmat + \mu S \oblaplacianevalmat
\\&-
\compblselectionmat^T \auxmat \blselectionmat
(\blselectionmat^T \auxmat \blselectionmat + \mu S \iblaplacianevalmat)\inv \blselectionmat^T \auxmat
\compblselectionmat.
\end{align*}
 Recalling that $\blselectionmat^T\auxmat \blselectionmat$ is
invertible and letting $\beta \rightarrow \infty$, it follows that
$\auxmatt\inv \rightarrow \bm 0$ and $\iblaplacianevalmat\rightarrow
\bm 0$ as $\beta \rightarrow \infty$, which implies that
\eqref{eq:blandkernelleftterm} converges to the right-hand side of
\eqref{eq:blandkernelsufficientmat} and concludes the proof.

\section{Proof of \thref{prop:markov}}
\label{app:proof:prop:markov}

 The first-order optimality condition for
\eqref{eq:single-lq-m-a} is given by
\begin{equation}
\label{eq:covariancekernelsoptimality}
 \noisevar \covmat\inv\rkhsvec =
\samplingmat^T( \observationvec-\samplingmat \rkhsvec).
\end{equation}
Without loss of generality, one can focus on the relation implied by
the first row of \eqref{eq:covariancekernelsoptimality}.  To this end,
partition $\covmat$ as
\begin{align*}
\covmat =
\left[
\begin{array}{c c}
\cov_{1,1}&\covvec_{\rest,1}^T\\
\covvec_{\rest,1}&\covmat_{\rest,\rest}
\end{array}
\right]
\end{align*}
and apply block matrix inversion formulae to obtain
\begin{align*}
\covmat\inv =
\frac{1}{\covcond}
\left[
\begin{array}{c c}
1&-\covvec_{\rest,1}^T\covmat_{\rest,\rest}\inv\\
-\covmat_{\rest,\rest}\inv\covvec_{\rest,1}&\tilde\covmat\inv
\end{array}
\right]
\end{align*}
where  $\covcond\define
\cov_{1,1}-\covvec_{\rest,1}^T\covmat_{\rest,\rest}\inv\covvec_{\rest,1}$
and
 \begin{align*}
\tilde \covmat\inv\define\covmat_{\rest,\rest}\inv
\covvec_{\rest,1}
\covvec_{\rest,1}^T\covmat_{\rest,\rest}\inv
+\covcond
\covmat_{\rest,\rest}\inv.
\end{align*}
Note that $\sigma^2_{\vertexind|\mathcal{N}_\vertexind} \define
\covcond $ is in fact the variance of the LMMSE predictor for
$\signalfun(v_1)$ given
$\signalfun(v_2),\ldots,\signalfun(v_\vertexnum)$.  

Two cases can be considered for the first row
of~\eqref{eq:covariancekernelsoptimality}. First, if $1\notin
\samplingset$, then the first row of $\samplingmat$ is zero, and the
first row of \eqref{eq:covariancekernelsoptimality} becomes
\begin{subequations}
\begin{align}
\label{eq:rowonezero}
\rkhsfun(v_1)&=\covvec_{\rest,1}^T\covmat_{\rest,\rest}\inv\rkhsvec_{\rest} \\&= 
\sum_{n:v_n\in \mathcal{N}_1} (-\covcond
\invcovmatent_{1,n}) \rkhsfun(v_n) \label{eq:rowonezero2}
\end{align}
\end{subequations}
where $\rkhsvec_{\rest}\define[f(v_2),\ldots,f(v_\vertexnum)]^T$ and
$\invcovmatent_{\vertexind,\vertexindp} =
(\covmat\inv)_{\vertexind,\vertexindp}$. The sum in
\eqref{eq:rowonezero2} involves only the neighbors of $v_1$ since the
graph is a Markov random field, for which if there is no edge between
$v_\vertexind$ and $v_\vertexindp$, then $\signalfun(v_\vertexind)$
and $\signalfun(v_\vertexind)$ are conditionally independent given the
rest of vertices, which in turn implies that
$\invcovmatent_{\vertexind,\vertexindp}=0$.
Note that the right-hand side of \eqref{eq:rowonezero} is the LMMSE
predictor of $\rkhsfun(v_1)$ given the estimated function value at its
neighbors. Since this argument applies to all vertices
$v_\vertexind,~\vertexind\notin \samplingset$, it follows that the
optimality condition \eqref{eq:covariancekernelsoptimality} seeks
values of $\rkhsfun(v_\vertexind)$ so that the function value at unobserved
vertices agrees with its LMMSE estimate given the estimated value at
its neighbors.

 On the other hand, if $1\in \samplingset$, the first row of
$\samplingmat$ has a 1 at the $(1,1)$ position, which implies that the
first row of \eqref{eq:covariancekernelsoptimality} is
\begin{align*}
 \observation_1 &=
\rkhsfun(v_1)+\frac{\noisevar}{\covcond}(\rkhsfun(v_1)-\covvec_{\rest,1}^T\covmat_{\rest,\rest}\inv\rkhsvec_{\rest}
).
% \rkhsfun_1-\sum_{n:v_n\in \mathcal{N}_1} (-\covcond
% \invcovmatent_{1,n}) \rkhsfun_n
\end{align*}
The second term on the right can be thought of as an estimate of the
noise $\noisesamp_1$ present in $\observation_1$.  Therefore, the
optimality condition imposes that each observation
$\observation_{\sampleind(\vertexind)}$ agrees with the estimated
noisy version of the function given the neighbors of~$v_\vertexind$.

\bibliographystyle{IEEEbib}
\bibliography{my_bibliography}

\end{document}